\documentclass[11pt]{article}
\usepackage{sustyle}
\usepackage{tikz}
\usepackage{iftex}
\usetikzlibrary{decorations.text,shapes,arrows.meta}
\usepackage[most]{tcolorbox}

\newtcolorbox{researchquestion}[1]{
  colback=white,
  colframe=black!45,
  boxrule=0.5pt,
  arc=1pt,
  left=7pt,
  right=7pt,
  top=6pt,
  bottom=6pt,
  before skip=8pt,
  after skip=8pt,
  title=\textbf{#1},
  fonttitle=\normalsize
}

\usepackage[hidelinks]{hyperref}
\usepackage{hyperref}
\hypersetup{
    colorlinks=true,       
    linkcolor=blue,        
    citecolor=blue,        
    urlcolor=blue          
}

\usepackage{setspace}
\usepackage{pgfplots}
\usepackage{bchart}

\usepackage{verbatim}

\pgfplotsset{compat=1.18}
\usetikzlibrary{
    arrows.meta,
    calc,
    positioning
}

\newcommand{\citep}{\cite}
\newcommand{\citet}{\cite}

\title{When Is Muon Optimal? Radial Fragility and Minimax Geometry}

\begin{document}

\author{Ao Xu\\[0.5em]
\small Jilin University, Changchun 130012, China\\
\small Zhongguancun Academy, Beijing 100094, China\\[0.25em]
\small \texttt{xuao24@mails.jlu.edu.cn, s-xa24@bza.edu.cn}}
\date{August 5, 2026}

\maketitle

\begin{abstract}
Muon replaces a matrix gradient by an approximate polar factor and has
shown strong empirical performance in neural-network training. The
Isotropic Curvature Model (ICM) provides a curvature-based explanation
for this update by asking when gradient orthogonalization minimizes a
one-step local-loss surrogate. We investigate the stability of this
explanation when the fixed input radius of the ICM is replaced by a
general radial law.
We introduce the Random-Radius Isotropic Curvature Model (RR-ICM) and
the Polar-Factor Admissibility Set, which encodes exact scaled-polar
optimality as a geometric condition on the positive polar factor of the
gradient. We derive an exact characterization of this set for arbitrary
radial laws and show that all nonscalar admissible responses arise from
radial atoms aligned with curvature kinks. Consequently, every atomless
radial law exhibits zero-capacity rigidity: exact scaled-polar
optimality is possible only when the gradient is already a scaled
column isometry.
To connect the curvature-based framework with the established spectral
geometry of Muon, we introduce the Minimax RR-ICM by replacing
directional averaging with worst-case optimization over probability
laws on the sphere. We prove that every nonzero global minimizer lies
on the polar ray of the gradient, while its scale is determined by a
one-dimensional convex problem. The corresponding admissibility
geometry is a positive-semidefinite trace slab, in sharp contrast to
the spherical-zonoid geometry of the RR-ICM. Together, these results
expose the radial fragility of average-curvature explanations and
provide a new minimax variational characterization of Muon's polar
direction.
\end{abstract}

\section{Introduction}
\label{sec:introduction}
Muon is a matrix-aware optimizer designed primarily for hidden-layer weight
matrices. Given a momentum-smoothed matrix gradient
\[
M_t=U_t\Sigma_tV_t^\top,
\]
Muon applies a small number of Newton--Schulz iterations to approximate
its polar factor $U_tV_t^\top$, while non-matrix parameters such as
embeddings, normalization parameters, biases, and output heads are
typically handled by AdamW~\cite{jordan2024muon}.
Consequently, Muon preserves the left and right singular subspaces of
the momentum update while largely flattening its nonzero singular
values. In this sense, it replaces a potentially highly anisotropic
matrix update by an approximately semi-orthogonal one. 
Muon first attracted attention through strong empirical performance in
small-scale language-model and image-classification training, and was
subsequently shown to scale to substantially larger language models.
Liu et al.~\cite{liu2025muon} identified weight decay and
parameter-shape-dependent update scaling as key ingredients for stable
large-scale training, and developed a memory- and
communication-efficient distributed implementation. Their scaling-law
experiments reported that Muon required approximately $52\%$ of the
training FLOPs needed by a tuned AdamW baseline at matched performance.
They further used Muon to train Moonlight, a $3\mathrm{B}/16\mathrm{B}$
mixture-of-experts model, on $5.7$ trillion tokens. These results
suggest that matrix-gradient orthogonalization is not merely a
small-model heuristic, but can provide meaningful optimization gains
at the scale of modern language-model training.
\begin{figure}[t]
    \centering
    \includegraphics[width=\textwidth]{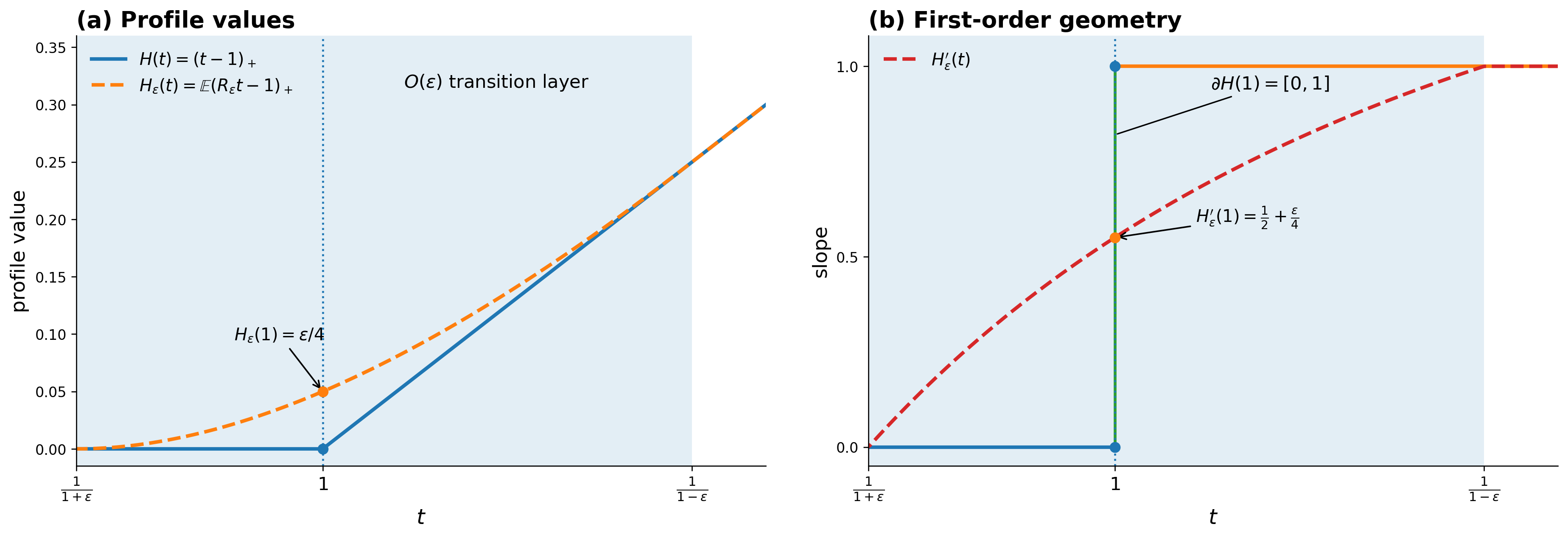}
    \caption{
        Radial averaging is a small perturbation at the level of function
        values but a singular perturbation at a kink. For $H(t)=(t-1)_+$ and
        $R_\varepsilon\sim\operatorname{Unif}
        [1-\varepsilon,1+\varepsilon]$, the averaged profile
        $H_\varepsilon(t)=\mathbb{E}(R_\varepsilon t-1)_+$ differs from $H$
        only within an $O(\varepsilon)$ transition layer. Nevertheless, the
        interval $\partial H(1)=[0,1]$ is replaced by the unique derivative
        $H_\varepsilon'(1)=1/2+\varepsilon/4$.
    }
    \label{fig:kink-radial-smoothing}
\end{figure}

Theoretical interest in this transformation extends beyond Muon itself.
Earlier work on stochastic and preconditioned spectral descent already
explored matrix updates adapted to spectral geometry~\cite{carlson2015stochasticRBM,carlson2015preconditioned}.
More recent studies have interpreted Muon through several closely
related viewpoints, including spectral-norm steepest descent and
architecture-dependent duality maps~\cite{bernstein2024old,bernstein2024modular},
norm-constrained linear minimization oracles~\cite{pethick2025training},
non-Euclidean trust-region optimization~\cite{kovalev2025understanding},
and implicit spectral-norm constraints~\cite{chen2025muon}.
These perspectives establish why the polar factor is a natural or
optimal update under a specified or implicitly induced spectral
geometry.
Related work has also connected Muon to matrix-aware preconditioning
and structured curvature surrogates
~\cite{lau2025polargrad,du2026newtonmuon}.
Su~\cite{su2025isotropic} took a complementary approach.
Rather than prescribing a norm geometry, constraint set, or trust
region, he derived the Isotropic Curvature Model (ICM) as a one-step
variational surrogate for the local change in a neural-network training
loss, under an isotropy assumption on its curvature across perturbation
directions. Within this surrogate framework, he asked under what
curvature conditions gradient orthogonalization—and, in particular,
Muon's polar update—emerges as an exact optimal matrix update.
Given a gradient matrix $G\in\mathbb{R}^{m\times n}$, the ICM considers
\begin{equation}
    J_{\mathrm{sph}}(Q;G)
    =
    -\langle G,Q\rangle
    +
    \mathbb{E}_{\zeta}H(\|Q\zeta\|_2),
    \qquad
    \zeta\sim\operatorname{Unif}(\mathbb{S}^{n-1}), \notag
\end{equation}
where $H$ is a scalar curvature profile. If $G=PS$ is the thin polar
decomposition and $Q=cP$, then $\|Q\zeta\|_2=c$ for every
$\zeta\in\mathbb S^{n-1}$. Hence, when $H$ has a kink at $c$, all spherical
samples lie at the same nondifferentiable point. Direction-dependent
subgradient selections may then generate a nonscalar matrix response capable
of supporting a scaled polar update.
The fixed-sphere assumption is often motivated by high-dimensional radial
concentration~\cite{van2014probability}. For example, a standard Gaussian vector satisfies
\begin{equation*}
    \frac{Z}{\sqrt n}
    =
    \rho_n \zeta,
    \qquad
    Z\sim\mathcal{N}(0,I_n),
    \qquad
    \rho_n=\frac{\|Z\|_2}{\sqrt n},
\end{equation*}
where $\zeta$ is uniform on $\mathbb S^{n-1}$ and independent of $\rho_n$,
and $\rho_n\to1$ in probability. For smooth observables, this suggests that
the normalized-Gaussian and fixed-sphere models should be asymptotically
close. 
Exact polar optimality in the ICM, however, relies on a set-valued
subdifferential at a kink, and radial concentration need not preserve this
first-order variational geometry. A deterministic radius can place every
sample exactly on the kink, whereas an atomless random radius hits any
prescribed kink with probability zero, no matter how concentrated its law is.
Thus, a thin random shell and a fixed sphere may be arbitrarily close in
radius while differing discontinuously in the subdifferential structure that
governs exact optimality.
Figure~\ref{fig:kink-radial-smoothing} illustrates this distinction for a
scalar hinge: radial averaging changes function values only within a shrinking
transition layer, yet replaces a set-valued subgradient by a unique smooth
response.

Motivated by the preceding observation, we introduce a natural
extension of the ICM in which the input radius is allowed to vary while
the directional component remains uniform on the sphere. We call this
the \emph{Random-Radius Isotropic Curvature Model} (RR-ICM):
\begin{equation}
    J_R(Q;G)
    =
    -\langle G,Q\rangle
    +
    \mathbb{E}_{R,\zeta}H(R\|Q\zeta\|_2),
    \qquad
    \zeta\sim\operatorname{Unif}(\mathbb{S}^{n-1}),
    \qquad
    R\perp\!\!\!\perp \zeta. \notag
\end{equation}
The model randomizes only the input radius while preserving rotational
symmetry. 

This extension leads to our first structural question.

\begin{researchquestion}{Question 1: Exact polar optimality under radial variation}
When the fixed-sphere ICM is extended to a general radial law, under
what conditions can a scaled polar factor $cP$ remain an exact global
minimizer? Can these conditions be characterized completely in terms of
the radial law, the curvature profile, and the spectrum of the
gradient's positive polar factor?
\end{researchquestion}

To answer this question, we introduce the
\emph{Polar-Factor Admissibility Set}, which encodes exact scaled-polar
optimality as a geometric condition on the positive polar factor of the
gradient. For an arbitrary radial law, we prove the exact
representation
\[
A(J_R,cP)=b_cI_n+w_c\mathcal K_n,
\]
where $\mathcal K_n$ is a spherical zonoid and $w_c$ records the total
atom--kink resonance at scale $c$. This formula gives a complete
geometric characterization of the gradient spectra compatible with
exact polar optimality. In particular, all nonscalar spectral freedom
is generated by radial atoms aligned with curvature kinks; when the
radial law is atomless, the admissibility set collapses to a single
scalar matrix.
This rigidity reveals a precise limitation of the average-curvature
explanation. Although radial concentration may make a thin shell close
to a sphere at the level of function values, exact polar optimality for
a general nonscalar gradient spectrum depends on a fragile
atom--kink resonance. Thus, the average RR-ICM alone does not provide a
radially robust variational explanation of Muon's polar update.

This conclusion creates an explanatory tension. Muon's polar direction
has broad spectral-geometric interpretations and performs robustly
across large-scale training settings~\cite{liu2025muon}, whereas exact polar optimality in
the average RR-ICM can be destroyed by an arbitrarily thin atomless radial perturbation. This leads to our second structural question.

\begin{researchquestion}
{\textbf{Question 2: A curvature-based bridge to spectral geometry}}
Does there exist a natural counterpart of the RR-ICM that retains its
local-curvature and random-radius structure, while selecting the polar
factor as the optimal matrix direction for general gradient spectra?
Can such a formulation establish a variational connection between the
ICM viewpoint and the established spectral interpretations of Muon?
\end{researchquestion}

To address this question, we replace the prescribed uniform
directional law in the RR-ICM by a worst-case choice over all
probability laws on the sphere. Writing
$\mathcal P(\mathbb S^{n-1})$ for the set of Borel probability measures
on $\mathbb S^{n-1}$, we introduce the \emph{Minimax RR-ICM}
\[
J_R^{\mathrm M}(Q;G)
=
-\langle G,Q\rangle
+
\sup_{\nu\in\mathcal P(\mathbb S^{n-1})}
\mathbb E_{\zeta\sim\nu}
\overline H_R(\|Q\zeta\|_2).
\]
We prove that, whenever the Minimax RR-ICM admits a nonzero global
minimizer, every such minimizer lies on the polar ray generated by the
gradient: its operator-norm-normalized matrix direction is exactly the
polar factor of $G$. The Minimax RR-ICM therefore provides
a new curvature-based minimax variational characterization of Muon's
polar update. Moreover, because it is obtained from the RR-ICM by
changing only how directional variation is aggregated, this
characterization establishes a direct variational bridge between the
ICM viewpoint and the established spectral-geometric interpretations
of Muon.

\paragraph{Contributions.}
Our main contributions are summarized as follows.
\begin{enumerate}
    \item \textbf{A polar-factor admissibility framework.}
    We introduce the \emph{Polar-Factor Admissibility Set}, which
    reformulates exact optimality of a scaled polar update as a
    convex-set membership condition on the positive polar factor of
    the gradient. This provides a common variational language for
    comparing different curvature models.

    \item \textbf{Exact polar-factor admissibility in the RR-ICM.}
    For the Random-Radius Isotropic Curvature Model, we characterize
    completely the gradient spectra compatible with exact scaled-polar
    optimality. For an arbitrary radial law, we prove
    \[
    A(J_R,cP)=b_cI_n+w_c\mathcal K_n,
    \]
    where $\mathcal K_n$ is a spherical zonoid and $w_c$ records the total
    atom--kink resonance at scale $c$. This representation shows that
    radial atoms aligned with curvature kinks are the sole source of
    nonscalar admissible responses, and yields a spectral-capacity
    bound and an exact rigidity result for atomless radial laws.

    \item \textbf{A new minimax variational characterization of Muon.}
    We introduce the Minimax RR-ICM by replacing directional averaging
    with worst-case optimization over spherical probability laws. We
    prove that every nonzero global minimizer lies on the polar ray of
    the gradient, while its scale is determined by an exact
    one-dimensional convex optimization problem.

    \item \textbf{A geometric bridge between curvature and spectral
    interpretations.}
    We characterize the Minimax Polar-Factor Admissibility Set as a
    positive-semidefinite trace slab and place it in direct comparison
    with the spherical-zonoid geometry of the average RR-ICM. The two
    models thereby reveal distinct mechanisms of polar optimality:
    atom--kink resonance under directional averaging and
    operator-norm subdifferential geometry under directional minimax
    robustification.
\end{enumerate}

\paragraph{Organization.}
Section~\ref{sec:rr-icm} formulates the RR-ICM and isolates the effect of radial
randomness through an effective curvature profile. Section~\ref{sec:main-results} gives the
exact characterization of the Polar-Factor Admissibility Set and
derives the resulting spectral-capacity and atomless-rigidity
consequences. Section~\ref{sec:optimal} introduces the Minimax RR-ICM,
establishes its polar optimizer geometry, and compares its admissibility
set with that of the average RR-ICM. The proofs are collected in
Section~\ref{sec:proofs}, and Section~\ref{sec:discussion} concludes with a discussion of the scope,
interpretation, and limitations of the framework.

\section{The Random-Radius Isotropic Curvature Model}
\label{sec:rr-icm}
Because the Gaussian radius concentrates around one in high dimensions, the
fixed-sphere and normalized-Gaussian models become close at the level of
ordinary function values. This section explains why that closeness need not
preserve first-order variational structure at a kink, and then formulates the
random-radius extension of the ICM.

\subsection{Why a Thin Shell Is Not a Sphere at a Kink}
\label{subsec:thin-shell-kink}

Consider the hinge profile
\begin{equation*}
    H(t)=(t-1)_+,
    \qquad t\geq 0.
\end{equation*}
and replace the deterministic unit radius by the atomless perturbation
\begin{equation*}
    R_\varepsilon
    \sim
    \operatorname{Unif}[1-\varepsilon,1+\varepsilon],
    \qquad
    0<\varepsilon<1.
\end{equation*}
The corresponding radially averaged profile is
\begin{equation*}
    H_\varepsilon(t)
    =
    \mathbb{E}(R_\varepsilon t-1)_+
    =
    \frac{1}{2\varepsilon}
    \int_{1-\varepsilon}^{1+\varepsilon}
    (rt-1)_+\,dr.
\end{equation*}
A direct calculation gives
\begin{equation}
    H_\varepsilon(t)
    =
    \begin{cases}
        0,
        &
        0\leq t\leq \dfrac{1}{1+\varepsilon},
        \\[2ex]
        \dfrac{\bigl((1+\varepsilon)t-1\bigr)^2}
        {4\varepsilon t},
        &
        \dfrac{1}{1+\varepsilon}
        <
        t
        <
        \dfrac{1}{1-\varepsilon},
        \\[3ex]
        t-1,
        &
        t\geq \dfrac{1}{1-\varepsilon}.
    \end{cases}
    \label{eq:averaged-hinge-closed-form}
\end{equation}
Thus, radial averaging modifies the hinge only on the transition interval
\begin{equation*}
    \left[
        \frac{1}{1+\varepsilon},
        \frac{1}{1-\varepsilon}
    \right].
\end{equation*}
The width of this interval is
\begin{equation*}
    \frac{1}{1-\varepsilon}
    -
    \frac{1}{1+\varepsilon}
    =
    \frac{2\varepsilon}{1-\varepsilon^2}
    =
    O(\varepsilon).
\end{equation*}
The perturbation is therefore small at the level of function values. Indeed,
because the hinge is \(1\)-Lipschitz, for every fixed \(T<\infty\),
\begin{equation*}
    \sup_{0\leq t\leq T}
    |H_\varepsilon(t)-H(t)|
    \leq
    T\mathbb{E}|R_\varepsilon-1|
    =
    \frac{\varepsilon T}{2}.
\end{equation*}

Its first-order geometry is nevertheless qualitatively different.
Differentiating~\eqref{eq:averaged-hinge-closed-form} yields
\begin{equation*}
    H_\varepsilon'(t)
    =
    \begin{cases}
        0,
        &
        0<t\leq \dfrac{1}{1+\varepsilon},
        \\[2ex]
        \dfrac{(1+\varepsilon)^2-t^{-2}}
        {4\varepsilon},
        &
        \dfrac{1}{1+\varepsilon}
        <
        t
        <
        \dfrac{1}{1-\varepsilon},
        \\[3ex]
        1,
        &
        t\geq \dfrac{1}{1-\varepsilon}.
    \end{cases}
\end{equation*}
At the original kink,
\begin{equation*}
    \partial H(1)=[0,1],
    \qquad
    H_\varepsilon'(1)
    =
    \frac{1}{2}+\frac{\varepsilon}{4}.
\end{equation*}
Hence, for every \(\varepsilon>0\), the entire subgradient interval of the
deterministic hinge is replaced by a single slope, even though
\(H_\varepsilon\to H\) locally uniformly as \(\varepsilon\to0\).

Figure~\ref{fig:kink-radial-smoothing} visualizes this distinction. A thin
random shell and a fixed sphere may be close in radius, but they charge a kink
in fundamentally different ways: the deterministic sphere places all of its
mass at the same transition, whereas an atomless radial law assigns zero mass
to every prescribed radius. Radial concentration therefore controls profile
values without necessarily preserving the first-order variational geometry
relevant to optimization.
The distinction is amplified in the matrix ICM. At a scaled polar point, the
fixed-sphere model can place every direction on the same kink and use the
resulting set-valued slopes to support a nonscalar matrix subgradient. Once
the radius becomes atomless, this direction-dependent slope-selection
mechanism may disappear. This motivates retaining the uniform spherical
direction while allowing the radius itself to be random.

\subsection{RR-ICM Formulation and Assumptions}
\label{subsec:rr-icm-model}

Let $m\geq n$, and let $G,Q\in\mathbb R^{m\times n}$ denote the
gradient signal and the matrix update, respectively. We write $\langle A,B\rangle:=\operatorname{tr}(A^\top B)$
for the Frobenius inner product. Let
$\zeta\sim\operatorname{Unif}(\mathbb S^{n-1})$, and let $R\geq0$ be a
random radius independent of $\zeta$.
Given a scalar curvature profile
$H:[0,\infty)\to\mathbb R$, the RR-ICM is defined by
\begin{equation}
    J_R(Q;G)
    :=
    -\langle G,Q\rangle
    +
    \mathbb E_{R,\zeta}
    H\!\left(R\|Q\zeta\|_2\right). \notag
\end{equation}
The fixed-sphere ICM is recovered as the special case $R\equiv1$.

It is convenient to absorb the radial law into an effective scalar
curvature profile. Define
\begin{equation}
    \overline H_R(t)
    :=
    \mathbb E_R H(Rt),
    \qquad
    t\geq0. \notag
\end{equation}
Since $R\geq0$ and is independent of $\zeta$,
\begin{align*}
    \mathbb E_{R,\zeta}
    H\!\left(R\|Q\zeta\|_2\right)
    &=
    \mathbb E_\zeta
    \left[
        \mathbb E_R
        H\!\left(R\|Q\zeta\|_2\right)
    \right]
    \\
    &=
    \mathbb E_\zeta
    \left[
        \overline H_R(\|Q\zeta\|_2)
    \right].
\end{align*}
Consequently, the RR-ICM admits the equivalent representation
\begin{equation}
    J_R(Q;G)
    =
    -\langle G,Q\rangle
    +
    \mathbb E_\zeta
    \left[
        \overline H_R(\|Q\zeta\|_2)
    \right].
    \notag
\end{equation}
Thus, radial randomness enters the objective entirely through the
scalar transformation
\[
    H\longmapsto\overline H_R,
\]
while the uniform spherical directional law remains unchanged.

Whenever a polar decomposition is invoked, we assume that $G$ has full
column rank and write
\begin{equation}
    G=PS,
    \qquad
    P^\top P=I_n,
    \qquad
    S=(G^\top G)^{1/2}\succ0.
    \notag
\end{equation}
No rank restriction is imposed on the optimization variable $Q$. 

\paragraph{Standing assumptions.}
Unless stated otherwise, $H:[0,\infty)\to\mathbb R$ is finite, convex,
and nondecreasing; $R\geq0$ is independent of
$\zeta\sim\operatorname{Unif}(\mathbb S^{n-1})$; and the effective
profile $\overline H_R$ is finite on $[0,\infty)$. Whenever scaled-polar
optimality is considered, $G\in\mathbb R^{m\times n}$ is assumed to
have full column rank, with $m\geq n$. These conditions retain the
baseline convexity, monotonicity, and rank assumptions used in Su's
analysis of orthogonalization in the fixed-sphere ICM
~\cite{su2025isotropic}. 

\section{Characterization of the Polar-Factor Admissibility Set}
\label{sec:main-results}
This section gives a complete geometric characterization of exact
scaled-polar optimality in the RR-ICM. We first recast Su's
fixed-sphere optimality condition in terms of the
Polar-Factor Admissibility Set, which records the positive polar
factors compatible with a scaled polar minimizer. We then characterize
this set exactly under an arbitrary radial law as a translated and
dilated spherical zonoid. The resulting formula identifies
atom--kink resonance as the sole source of nonscalar admissible
responses and leads to a quantitative spectral-capacity bound. As a
sharp consequence, every atomless radial law exhibits zero-capacity
rigidity: exact scaled-polar optimality can occur only when the
gradient's positive polar factor is scalar.

\subsection{From Exact Optimality to Polar-Factor Admissibility}
\label{subsec:RR-ICM}

Following the first-order condition used in Su’s proof~\cite{su2025isotropic}, we formulate the following exact admissibility criterion.
Let the compact
singular value decomposition and thin polar decomposition of the gradient be
\[
    G=U\Sigma V^\top
    =(UV^\top)(V\Sigma V^\top)
    =PS.
\]
The gradient \(G\) and curvature profile \(H\) induce the fixed-sphere
objective
\begin{equation*}
    J_{\mathrm{sph}}(Q;G)
    =
    -\langle G,Q\rangle
    +
    \mathbb{E}_{\zeta}H(\|Q\zeta\|_2),
    \qquad
    \zeta\sim\operatorname{Unif}(\mathbb{S}^{n-1}).
\end{equation*}
We ask when \(cP\) is a global minimizer of this objective. Su's result can be
written as
\[
    cP\in\operatorname*{arg\,min}_{Q}J_{\mathrm{sph}}(Q;G)
    \quad\Longleftrightarrow\quad
    S\in
    \left\{
        \mathbb{E}_{\zeta}
        [\eta(\zeta)\zeta\zeta^\top]:
        \eta:\mathbb S^{n-1}\to\partial H(c)
        \ \text{measurable}
    \right\}.
\]
Accordingly, define
\begin{equation}
    A(J_{\mathrm{sph}},cP)
    :=
    \left\{
        \mathbb{E}_{\zeta}
        [\eta(\zeta)\zeta\zeta^\top]:
        \eta:\mathbb S^{n-1}\to\partial H(c)
        \ \text{measurable}
    \right\},
    \label{eq:spherical-admissibility-definition}
\end{equation}
and call it the \emph{Polar-Factor Admissibility Set} determined by the
objective \(J_{\mathrm{sph}}\) and the evaluation point \(cP\). A wider subdifferential interval $\partial H(c)$ typically allows this
set to accommodate a richer family of positive polar factors. This
provides a geometric reading of Su's Assumption~3.3~\cite{su2025isotropic}: the slope interval
at the kink supplies the direction-dependent subgradient freedom needed
to match a broader range of gradient spectra, as is implicit in the
proof of the orthogonalization result.

Correspondingly, for the RR-ICM define
\begin{equation}
    A(J_R,cP)
    :=
    \left\{
        \mathbb E_{R,\zeta}
        [R\eta(R,\zeta)\zeta\zeta^\top]:
        \begin{array}{l}
        \eta\text{ measurable, and}\\[-0.2em]
        \eta(R,\zeta)\in\partial H(cR)
        \ \text{a.e. on }\{R>0\}
        \end{array}
    \right\}.
    \label{eq:rr-admissibility-definition}
\end{equation}
All integrands containing a factor \(R\) are defined to be zero on
\(\{R=0\}\). Our first objective is to characterize this set exactly.

Define the spherical zonoid
\begin{equation}
    \mathcal K_n
    :=
    \left\{
        \mathbb E_\zeta\!\left[a(\zeta)\zeta\zeta^\top\right]
        :
        a:\mathbb S^{n-1}\to[0,1]
        \text{ measurable}
    \right\}
    \subseteq \mathbb S^n,
    \label{eq:spherical-zonoid}
\end{equation}
where \(\mathbb S^n\) denotes the space of real symmetric \(n\times n\)
matrices. The selector \(a(\zeta)\) records the fraction of the slope jump
chosen in direction \(\zeta\). Thus, \(\mathcal K_n\) collects all
matrix-valued responses generated by direction-dependent measurable
selections from a scalar subdifferential interval. In the language of
classical zonoid theory, \(\mathcal K_n\) is the zonoid generated by the
random matrix \(\zeta\zeta^\top\in\mathbb S^n\); equivalently, it is the
Aumann expectation of the random line segment
\([0,\zeta\zeta^\top]\)~\cite{aumann1965integrals,koshevoy1998lift}.

For \(t>0\), let \(H_-(t)\) and \(H_+(t)\) denote the left and right
derivatives of \(H\), respectively, and define
\[
    \Delta H(t):=H_+(t)-H_-(t),
    \qquad
    \operatorname{Kink}(H)
    :=
    \{t>0:\Delta H(t)>0\}.
\]

\pagebreak[3]
\begin{theorem}
\label{thm:main1}
Assume that there exist \(p\geq1\) and \(C_H<\infty\)
such that $H_+(t)\leq C_H(1+t^{p-1})$,
and that \(\mathbb E[R^p]<\infty\). Define
\[
    b_c
    :=
    \frac1n\mathbb E[R H_-(cR)],
    \qquad
    w_c
    :=
    \sum_{\substack{r>0:\\cr\in\operatorname{Kink}(H)}}
    \mathbb P(R=r)\,r\Delta H(cr),
\]
where $c>0$. Then
\begin{equation}
    A(J_R,cP)
    =
    b_cI_n+w_c\mathcal K_n.
    \notag
\end{equation}
Consequently,
\begin{equation}
    cP\in\operatorname*{arg\,min}_{Q}J_R(Q;G)
    \quad\Longleftrightarrow\quad
    S\in b_cI_n+w_c\mathcal K_n.
    \notag
\end{equation}
\end{theorem}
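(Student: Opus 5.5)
The plan has two parts. First, identify the set $A(J_R,cP)$ defined in \eqref{eq:rr-admissibility-definition} with $b_cI_n+w_c\mathcal K_n$ by a direct decomposition of subgradient selections. Second, derive the optimality equivalence from the first-order condition for the convex functional $J_R$, exactly as in the fixed-sphere case.

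\textbf{Set identity.} Take any measurable selection $\eta(R,\zeta)\in\partial H(cR)=[H_-(cR),H_+(cR)]$ on $\{R>0\}$ and write
\[
\eta=H_-(cR)+\Delta H(cR)\,\alpha(R,\zeta),\qquad \alpha\in[0,1],
\]
with $\alpha$ arbitrary, say $0$, where $\Delta H(cR)=0$.

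The first term contributes $\mathbb E_R[RH_-(cR)]\,\mathbb E_\zeta[\zeta\zeta^\top]=b_cI_n$, using independence and $\mathbb E\zeta\zeta^\top=I_n/n$. The growth bound $H_+(t)\le C_H(1+t^{p-1})$ together with $\mathbb E R^p<\infty$ makes $RH_\pm(cR)$ integrable, which justifies the splitting. Note also that $H$ nondecreasing gives $H_-\ge0$ for $t>0$.

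For the second term, $\Delta H(cR)>0$ only on $\{cR\in\operatorname{Kink}(H)\}$. A convex function has at most countably many kinks, so this event is the countable union of the atoms $\{R=r\}$ with $cr\in\operatorname{Kink}(H)$, plus a null set coming from the non-atomic part of the law of $R$. On each atom $\{R=r\}$, the function $\zeta\mapsto\alpha(r,\zeta)$ is a measurable $[0,1]$-selector, since $\zeta$ is independent of $R$ and has the uniform law conditionally. The second term therefore equals
\[
\sum_r \mathbb P(R=r)\,r\Delta H(cr)\,K_r,\qquad K_r:=\mathbb E_\zeta[\alpha_r(\zeta)\zeta\zeta^\top]\in\mathcal K_n .
\]
Since $\mathcal K_n$ is convex, this weighted sum lies in $w_c\mathcal K_n$; when $w_c=0$ it is simply $0$. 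Convexity holds because the selectors form a convex set and the map to matrices is linear.

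Conversely, given $K=\mathbb E[a(\zeta)\zeta\zeta^\top]\in\mathcal K_n$, set $\alpha(r,\zeta)=a(\zeta)$ on every resonant atom to realize $b_cI_n+w_cK$. The same summability bounds, via $\mathbb E R^p$, show $w_c<\infty$.

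\textbf{Optimality equivalence.} The objective is $J_R(Q)=-\langle G,Q\rangle+\Phi(Q)$ with $\Phi(Q)=\mathbb E_{R,\zeta}H(R\|Q\zeta\|)$, which is convex and finite. Hence $cP$ is a minimizer if and only if $G\in\partial\Phi(cP)$.

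I would compute $\partial\Phi(cP)$ by interchanging subdifferential and expectation, using the integrable Lipschitz-type bounds on $H$ and the standard result that the subdifferential of a convex integral functional is the expectation of measurable selections of the pointwise subdifferentials. At $Q=cP$ we have $\|Q\zeta\|=c\neq0$, so the map $Q\mapsto R\|Q\zeta\|$ is differentiable there with gradient $RP\zeta\zeta^\top/c\cdot c=RP\zeta\zeta^\top$. This yields
\[
\partial\Phi(cP)=\{P M: M\in A(J_R,cP)\}.
\]

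Writing $G=PS$, the condition becomes $PS=PM$, and since $P^\top P=I$ this is equivalent to $S=M$. Combined with the set identity, this gives $cP\in\operatorname{arg\,min}J_R$ if and only if $S\in b_cI_n+w_c\mathcal K_n$.

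\textbf{Main obstacle.} The hardest step is the exact identity $\partial\Phi(cP)=\mathbb E[\text{selections}]$, and in particular the inclusion $\subseteq$. The chain rule for the composition $H\circ(R\|Q\zeta\|)$ is fine because $H$ is nondecreasing and convex and the inner function is convex. The interchange of subdifferential and integral, however, requires the normal-integrand theory (Rockafellar or Ioffe–Tikhomirov) with an integrable local Lipschitz modulus $\sup_{\|Q-cP\|\le1}R\,H_+(R(c+1))\,\|\zeta\|$. That modulus is integrable precisely because of the $p$-growth bound and $\mathbb E R^p<\infty$. I would cite that theorem, check its hypotheses here, and otherwise follow Su's fixed-sphere argument verbatim.
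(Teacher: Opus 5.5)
Your proposal is correct and follows essentially the same route as the paper. The paper likewise obtains $\partial\Phi_R(cP)=P\,A(J_R,cP)$ from the Rockafellar--Wets interchange theorem, using the integrable local Lipschitz bound $R H_+((c+\varepsilon)R)$, and then splits each selector as $H_-(cR)+a\,\Delta H(cR)$ to produce $b_cI_n$ plus a remainder supported on atoms aligned with kinks. The one difference is that the paper handles the possibly countably many resonant atoms in a single step through the averaged selector $\bar a(\zeta)=\mathbb E_R[R\Delta H(cR)\,a(R,\zeta)]/w_c$. That construction is what your ``convexity via selectors'' remark needs in order to cover infinite convex combinations, since finite convexity of $\mathcal K_n$ alone would not suffice.
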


\paragraph{Recovery of the fixed-sphere case.}
Theorem~\ref{thm:main1} contains Su's fixed-sphere result as an exact special
case. If \(R\equiv1\), then \(J_R=J_{\mathrm{sph}}\), and the only radius that
can contribute to the sum defining \(w_c\) is \(r=1\). Hence
\[
    b_c=\frac{H_-(c)}{n},
    \qquad
    w_c=\Delta H(c).
\]
Theorem~\ref{thm:main1} and the defining optimality interpretation of the
Polar-Factor Admissibility Set therefore give the complete specialization
\[
    cP\in\arg\min_Q J_{\mathrm{sph}}(Q;G)
    \quad\Longleftrightarrow\quad
    S\in A(J_{\mathrm{sph}},cP)
    =
    \frac{H_-(c)}{n}I_n+\Delta H(c)\mathcal K_n.
\]
Indeed, since \(\mathbb E_\zeta[\zeta\zeta^\top]=I_n/n\) and
\(\partial H(c)=[H_-(c),H_+(c)]\), every measurable selector in
\eqref{eq:spherical-admissibility-definition} has the form
\(\eta(\zeta)=H_-(c)+a(\zeta)\Delta H(c)\) with \(0\leq a(\zeta)\leq1\),
and conversely. Thus the displayed membership condition is precisely Su's
fixed-sphere exact-optimality criterion, rather than merely a specialization
of the two scalar coefficients.

\subsection{Zero-Capacity Rigidity under Atomless Radial Laws}
\label{subsec:spectral-capacity-barriers}

The coefficient \(w_c\) quantifies the total strength of atom--kink
resonance at scale \(c\). Each term in
\[
    w_c
    =
    \sum_{\substack{r>0:\\cr\in\operatorname{Kink}(H)}}
    \mathbb P(R=r)\,
    r\,\Delta H(cr)
\]
combines three ingredients: the atomic mass \(\mathbb P(R=r)\), the radial
chain-rule factor \(r\), and the slope jump
\(\Delta H(cr)=H_+(cr)-H_-(cr)\) available when the scaled radius \(cr\)
lands on a kink of \(H\). A radius contributes to \(w_c\) only if it is an
atom of the radial law and, after scaling by \(c\), aligns with a kink.
More importantly, \(w_c\) is the dilation coefficient of the spherical
zonoid in the exact admissibility region
\[
    b_cI_n+w_c\mathcal K_n.
\]
We therefore call \(w_c\) the \emph{atom--kink response capacity} at scale
\(c\). The term is convex-geometric: \(w_c\) controls the size of the
nonscalar matrix-response set generated by direction-dependent subgradient
selection. When \(w_c=0\), the response capacity vanishes and the exact
admissibility region collapses to the singleton \(\{b_cI_n\}\). For fixed
\(b_c\), increasing \(w_c\) dilates \(\mathcal K_n\) and enlarges the family
of positive polar factors that exact polar optimality can support.

To translate this convex-geometric capacity into a spectral quantity, note
that every \(K\in\mathcal K_n\) satisfies
\begin{equation}
    0
    \preceq
    K
    \preceq
    \frac1n I_n.
    \notag
\end{equation}
Indeed, if
\[
    K=\mathbb E_\zeta[a(\zeta)\zeta\zeta^\top],
    \qquad
    0\leq a(\zeta)\leq1,
\]
then
\[
    0
    \preceq
    a(\zeta)\zeta\zeta^\top
    \preceq
    \zeta\zeta^\top,
\]
and taking expectations gives
\[
    0
    \preceq
    K
    \preceq
    \mathbb E_\zeta[\zeta\zeta^\top]
    =
    \frac1nI_n.
\]
This motivates the dimension-normalized quantity
\begin{equation}
    \kappa_c
    :=
    \frac{w_c}{n},
    \notag
\end{equation}
which we call the \emph{spectral capacity} at scale \(c\). While \(w_c\)
measures the overall dilation of the matrix-response set, \(\kappa_c\)
controls the spectral variation that the set can support.

\begin{corollary}[Spectral-capacity bound for exact polar optimality]
\label{cor:exact-admissibility-spectral-capacity}
Under the assumptions of Theorem~\ref{thm:main1}, suppose that \(cP\) is an
exact minimizer. Then
\begin{equation}
    b_cI_n
    \preceq
    S
    \preceq
    (b_c+\kappa_c)I_n.
    \notag
\end{equation}
Define the spectral diameter of \(S\) by
\begin{equation}
    \operatorname{diam}_{\mathrm{eig}}(S)
    :=
    \lambda_{\max}(S)-\lambda_{\min}(S).
    \notag
\end{equation}
Then exact polar optimality necessarily implies
\begin{equation}
    \operatorname{diam}_{\mathrm{eig}}(S)
    \leq
    \kappa_c.
    \notag
\end{equation}
\end{corollary}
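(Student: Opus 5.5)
By Theorem~\ref{thm:main1}, exact optimality of \(cP\) is equivalent to \(S\in b_cI_n+w_c\mathcal K_n\). So there is a measurable selector \(a:\mathbb S^{n-1}\to[0,1]\) with
\[
    S=b_cI_n+w_cK,
    \qquad
    K=\mathbb E_\zeta[a(\zeta)\zeta\zeta^\top]\in\mathcal K_n .
\]
The rest of the argument is just the Loewner sandwich for \(\mathcal K_n\) recorded right before the corollary, namely \(0\preceq K\preceq \tfrac1n I_n\), transported through this affine map.

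First I check that \(w_c\geq0\). Each summand \(\mathbb P(R=r)\,r\,\Delta H(cr)\) is nonnegative, because \(r>0\), \(\mathbb P(R=r)\ge 0\), and \(\Delta H\ge0\) by convexity of \(H\). The sum is over the atoms of \(R\), which form a countable set, so \(w_c\) is a well-defined sum of nonnegative terms. It is finite because it is dominated by \(\mathbb E[R H_+(cR)]\), and that quantity is finite by the growth assumption \(H_+(t)\le C_H(1+t^{p-1})\) together with \(\mathbb E[R^p]<\infty\). I will take this finiteness as part of Theorem~\ref{thm:main1}, since that theorem already makes sense of \(w_c\). Multiplying \(0\preceq K\preceq \tfrac1nI_n\) by the nonnegative scalar \(w_c\) preserves the Loewner order, which gives \(0\preceq w_cK\preceq \kappa_c I_n\) with \(\kappa_c=w_c/n\). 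Adding \(b_cI_n\) to all three terms then yields \(b_cI_n\preceq S\preceq (b_c+\kappa_c)I_n\).

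For the diameter bound I use the fact that \(S\) is symmetric. The Loewner inequalities then mean \(\lambda_{\min}(S)\ge b_c\) and \(\lambda_{\max}(S)\le b_c+\kappa_c\), via the Rayleigh quotients \(x^\top Sx\) over unit vectors \(x\). Subtracting the two inequalities gives \(\operatorname{diam}_{\mathrm{eig}}(S)\le\kappa_c\).

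I do not expect any real obstacle here. The only step that needs care is the sign and finiteness of \(w_c\), because the order-preserving scaling depends on \(w_c\ge0\), and this is exactly where convexity of \(H\) enters.
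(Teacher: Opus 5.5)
Your proposal is correct and follows the paper's own proof essentially step for step. You write \(S=b_cI_n+w_cK\) with \(K\in\mathcal K_n\) via Theorem~\ref{thm:main1}, use \(0\preceq K\preceq \frac1n I_n\), scale by \(w_c\ge 0\), shift by \(b_cI_n\), and read off the eigenvalue bounds; your explicit check that \(w_c\) is nonnegative and finite is a small justification that the paper simply asserts.
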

At a kink \(c\) of \(H\), the fixed-sphere law retains a nonzero spectral
response capacity because all of its radial mass resonates at the same kink.
By contrast, every atomless radial law satisfies \(w_c=0\); more generally,
only atoms aligned with scaled kinks contribute to \(w_c\).

\begin{corollary}[Zero-capacity rigidity under atomless radial laws]
\label{cor:atomless-zero-capacity-rigidity}
Under the assumptions of Theorem~\ref{thm:main1}, suppose that the law of
\(R\) is atomless. Then \(\kappa_c=0\). If \(cP\) is an exact minimizer,
then
\begin{equation}
    S=b_cI_n.
    \notag
\end{equation}
Consequently,
\(\operatorname{diam}_{\mathrm{eig}}(S)=0\).
\end{corollary}
In particular, if \(S\) has two distinct eigenvalues, then \(cP\) cannot be
an exact minimizer. Equivalently, whenever exact scaled-polar optimality
holds,
\begin{equation}
    G=b_cP.
    \notag
\end{equation}
Because \(G\) has full column rank, exact optimality forces \(b_c>0\). Thus, under an atomless radial law, exact scaled-polar optimality is
possible only in the degenerate spectral case where the gradient is
already a scaled column isometry.

\section{A Minimax Characterization of Muon's Polar Direction}
\label{sec:optimal}

\subsection{The Minimax RR-ICM} \label{subsec:minimax}
The rigidity result of Section~\ref{subsec:spectral-capacity-barriers}
shows that directional averaging in the RR-ICM may fail to support
exact polar optimality for a general gradient spectrum. Motivated by
this limitation, we retain the radial-curvature structure of the
RR-ICM while replacing the prescribed uniform directional law by
worst-case aggregation over all probability laws on the sphere.

Let $\mathcal P(\mathbb S^{n-1})$ denote the set of Borel probability
measures on $\mathbb S^{n-1}$. We define the
\emph{Minimax Random-Radius Isotropic Curvature Model}
(Minimax RR-ICM) by
\begin{equation}
    J_R^{\mathrm M}(Q;G)
    :=
    -\langle G,Q\rangle
    +
    \sup_{\nu\in\mathcal P(\mathbb S^{n-1})}
    \mathbb E_{\zeta\sim\nu}
    \overline H_R(\|Q\zeta\|_2).
    \label{eq:minimax-rr-icm}
\end{equation}
Thus, the RR-ICM and minimax RR-ICM share the same gradient term,
radial law, curvature profile, and directional state space; they differ
only in how directional variation is aggregated.

The ambiguity set in~\eqref{eq:minimax-rr-icm} is deliberately maximal.
Its purpose is not to estimate a data-driven uncertainty set, but to
identify the complete angular minimax counterpart of the RR-ICM~\cite{rahimian2022frameworks,kuhn2025dro}. Whenever a nonzero global minimizer exists, the resulting model
selects the polar factor as its normalized matrix direction for every
full-rank gradient spectrum.

\begin{theorem}[Minimax variational characterization of Muon's polar direction]
\label{thm:minimax-polar-reduction}
If the Minimax RR-ICM admits a nonzero global minimizer $Q^\star$, then
every such minimizer satisfies
\begin{equation}
    \frac{Q^\star}{\|Q^\star\|_{\mathrm{op}}}
    =
    P,
    \label{eq:universal-polar-direction}
\end{equation}
where $P$ is the polar factor of the gradient $G$.
\end{theorem}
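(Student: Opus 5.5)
The plan is to collapse the directional supremum in \eqref{eq:minimax-rr-icm} to an operator-norm term. Then the minimax objective becomes a function of $\langle G,Q\rangle$ and $\|Q\|_{\mathrm{op}}$ only, and the polar direction falls out of the equality case of the trace-duality inequality $\langle G,Q\rangle\le\|G\|_*\|Q\|_{\mathrm{op}}$.

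\emph{Step 1 (reduction of the supremum).} Under the standing assumptions, $H$ is nondecreasing and $R\ge 0$, so $\overline H_R(t)=\mathbb E_R H(Rt)$ is finite and nondecreasing on $[0,\infty)$; it is also convex, as an average of convex functions.
\begin{itemize}
\item Upper bound: for every $\nu$, $\|Q\zeta\|_2\le\|Q\|_{\mathrm{op}}$ for $\nu$-a.e.\ $\zeta$, so $\mathbb E_{\zeta\sim\nu}\overline H_R(\|Q\zeta\|_2)\le \overline H_R(\|Q\|_{\mathrm{op}})$.
\item Attainment: take $\nu=\delta_{v}$ with $v$ a top right singular vector of $Q$.
\end{itemize}
Hence
\[
J_R^{\mathrm M}(Q;G)=-\langle G,Q\rangle+\overline H_R(\|Q\|_{\mathrm{op}}).
\]
I expect this to be the conceptual crux, although it is short. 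The only care needed is monotonicity of $\overline H_R$ and measurability and integrability, which are guaranteed by finiteness of $\overline H_R$.

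\emph{Step 2 (directional comparison at fixed scale).} Fix $s=\|Q\|_{\mathrm{op}}$ and write $G=PS$ with $S=V\Sigma V^\top\succ0$. Setting $M:=P^\top Q$, we have $\|M\|_{\mathrm{op}}\le s$. Then
\[
\langle G,Q\rangle=\operatorname{tr}(SM)=\sum_i\sigma_i v_i^\top Mv_i\le s\operatorname{tr}S,
\]
with equality at $Q=sP$.

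\emph{Step 3 (rigidity of the equality case).} This is the step requiring the most care. Suppose $\langle G,Q\rangle=s\operatorname{tr}S$. Because every $\sigma_i>0$ (full column rank), we must have $v_i^\top Mv_i=s$ for all $i$. Combined with $\|Mv_i\|_2\le s$, Cauchy--Schwarz forces $Mv_i=sv_i$, so $P^\top Q=sI_n$. Now for a unit vector $x$, decompose
\[
\|Qx\|_2^2=\|PP^\top Qx\|_2^2+\|(I-PP^\top)Qx\|_2^2=s^2+\|(I-PP^\top)Qx\|_2^2\le s^2 .
\]
This forces $(I-PP^\top)Q=0$, and therefore $Q=sP$.

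\emph{Step 4 (conclusion).} Let $Q^\star\neq0$ be a global minimizer with $s=\|Q^\star\|_{\mathrm{op}}>0$. If $Q^\star\neq sP$, Steps 2--3 give $\langle G,sP\rangle>\langle G,Q^\star\rangle$. Since the curvature term $\overline H_R(s)$ is the same for both, $J_R^{\mathrm M}(sP;G)<J_R^{\mathrm M}(Q^\star;G)$, which contradicts minimality. Thus $Q^\star=sP$, and dividing by $\|Q^\star\|_{\mathrm{op}}=s$ yields \eqref{eq:universal-polar-direction}.

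As a by-product, the scale solves the one-dimensional convex problem $\min_{s\ge0}\{-s\operatorname{tr}S+\overline H_R(s)\}$.
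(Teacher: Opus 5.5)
Your proposal is correct and follows essentially the same route as the paper. You collapse the directional supremum to $\overline H_R(\|Q\|_{\mathrm{op}})$ via monotonicity and a Dirac mass at a top right singular vector, then use the strict equality case of $\langle G,Q\rangle\le\|Q\|_{\mathrm{op}}\|G\|_*$ under full column rank to compare $Q^\star$ with $\|Q^\star\|_{\mathrm{op}}P$. The only cosmetic difference is in the equality case: you pass through $M=P^\top Q$ and remove the component outside $\operatorname{range}(P)$ by a Pythagorean argument, whereas the paper applies Cauchy--Schwarz directly to $u_i^\top Qv_i\le\|Qv_i\|_2$ in $\mathbb R^m$ and obtains $Qv_i=tu_i$ in one step.
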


Theorem~\ref{thm:minimax-polar-reduction} provides a new
curvature-based minimax variational characterization of Muon's polar
direction. In particular, the radial law and curvature profile may
affect whether a nonzero optimizer exists and what its scale is, but
they do not alter its normalized matrix direction. The next subsection
characterizes the corresponding scale condition and
Polar-Factor Admissibility Set.

\subsection{Minimax Admissibility and Trace Matching}
\label{subsec:minimax-admissibility}

Theorem~\ref{thm:minimax-polar-reduction} determines the normalized
matrix direction of every nonzero global minimizer, but does not yet
identify which scales along the polar ray are optimal. To characterize
the corresponding first-order condition, isolate the Minimax curvature
term
\begin{equation}
    \Phi_R^{\mathrm M}(Q)
    :=
    \sup_{\nu\in\mathcal P(\mathbb S^{n-1})}
    \mathbb E_{\zeta\sim\nu}
    \overline H_R(\|Q\zeta\|_2),
    \label{eq:minimax-curvature-term}
\end{equation}
so that
\[
    J_R^{\mathrm M}(Q;G)
    =
    -\langle G,Q\rangle+\Phi_R^{\mathrm M}(Q).
\]

For \(c>0\), define the Minimax Polar-Factor Admissibility Set by
\begin{equation}
    A(J_R^{\mathrm M},cP)
    :=
    \left\{
        M\in\mathbb S^n:
        PM\in\partial\Phi_R^{\mathrm M}(cP)
    \right\}.
    \label{eq:minimax-admissibility-definition}
\end{equation}
This is the Minimax counterpart of the admissibility set introduced for
the RR-ICM and records the positive polar factors compatible with exact
optimality at \(cP\).

Introduce the density-matrix set
\begin{equation}
    \mathcal D_n
    :=
    \left\{
        W\in\mathbb S_+^n:
        \operatorname{tr}(W)=1
    \right\},
    \label{eq:density-matrix-set}
\end{equation}
where \(\mathbb S_+^n\) denotes the cone of real symmetric positive
semidefinite \(n\times n\) matrices. The next theorem shows that the
matrix-valued admissibility condition reduces exactly to a scalar
trace-matching condition and yields the complete set of global
minimizers.

\begin{theorem}[Minimax admissibility and trace matching]
\label{thm:minimax-admissibility-trace}
For every \(c>0\), the Minimax Polar-Factor Admissibility Set is
\begin{align}
    A(J_R^{\mathrm M},cP)
    &=
    \left\{
        \alpha W:
        \alpha\in\partial\overline H_R(c),\
        W\in\mathcal D_n
    \right\}
    \notag\\
    &=
    \left\{
        M\in\mathbb S_+^n:
        \operatorname{tr}(M)
        \in\partial\overline H_R(c)
    \right\}.
    \label{eq:minimax-admissibility-trace-slab}
\end{align}
Moreover,
\begin{equation}
\begin{aligned}
    cP\in\operatorname*{arg\,min}_Q J_R^{\mathrm M}(Q;G)
    &\quad\Longleftrightarrow\quad
    S\in A(J_R^{\mathrm M},cP)
    \\
    &\quad\Longleftrightarrow\quad
    \|G\|_*
    \in\partial\overline H_R(c)
    \\
    &\quad\Longleftrightarrow\quad
    c\in\operatorname*{arg\,min}_{t\geq0}
    \left[
        \overline H_R(t)-t\|G\|_*
    \right].
\end{aligned}
\label{eq:minimax-trace-matching-equivalences}
\end{equation}
Consequently,
\begin{equation}
    \operatorname*{arg\,min}_{Q\in\mathbb R^{m\times n}}
    J_R^{\mathrm M}(Q;G)
    =
    \left\{
        cP:
        c\in\operatorname*{arg\,min}_{t\geq0}
        \left[
            \overline H_R(t)-t\|G\|_*
        \right]
    \right\}.
    \label{eq:minimax-complete-argmin}
\end{equation}
\end{theorem}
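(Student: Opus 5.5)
The plan is to collapse the minimax curvature term to a composition of a scalar convex function with the operator norm, and then read off everything from the subdifferential calculus of that composition.

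\textbf{Step 1 (reduction of $\Phi_R^{\mathrm M}$).} Write $h:=\overline H_R$. Since $H$ is convex and nondecreasing and $R\geq0$, $h$ is finite, convex and nondecreasing on $[0,\infty)$. For every $\nu$ we have $\mathbb E_{\zeta\sim\nu}h(\|Q\zeta\|_2)\leq h(\|Q\|_{\mathrm{op}})$. Equality is attained by the Dirac mass at a top right singular vector of $Q$. Hence
\[
\Phi_R^{\mathrm M}(Q)=h(\|Q\|_{\mathrm{op}}).
\]
In particular $J_R^{\mathrm M}$ is convex, and $Q$ is a global minimizer iff $G\in\partial\Phi_R^{\mathrm M}(Q)$.

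\textbf{Step 2 (subdifferential at $cP$).} Extend $h$ to $\mathbb R$ by $h(t)=h(0)$ for $t<0$; this extension is still convex and nondecreasing. Since $\|\cdot\|_{\mathrm{op}}$ is a finite convex function, the chain rule for a nondecreasing convex outer function composed with a convex inner function applies without any qualification issue. It gives, for $Q\neq0$,
\[
\partial\Phi_R^{\mathrm M}(Q)=\{\alpha Y:\ \alpha\in\partial h(\|Q\|_{\mathrm{op}}),\ Y\in\partial\|\cdot\|_{\mathrm{op}}(Q)\}.
\]
At $Q=cP$ every singular value equals $c$, with left factor $P$ and right factor $I_n$. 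The standard description of the operator-norm subdifferential, $U_1WV_1^\top$ with $W$ a density matrix on the top singular space, therefore gives
\[
\partial\|\cdot\|_{\mathrm{op}}(cP)=\{PW:W\in\mathcal D_n\}.
\]
Also, $\partial h(c)=[h_-(c),h_+(c)]\subseteq[0,\infty)$ because $c>0$ is interior and $h$ is nondecreasing.

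Now take $PM\in\partial\Phi_R^{\mathrm M}(cP)$, so $PM=\alpha PW$. Multiplying on the left by $P^\top$ and using $P^\top P=I_n$ gives $M=\alpha W$, which is the first description in \eqref{eq:minimax-admissibility-trace-slab}.

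For the trace slab, the inclusion ``$\subseteq$'' holds because $\alpha W\succeq0$ and $\operatorname{tr}(\alpha W)=\alpha$. For ``$\supseteq$'', take $M\succeq0$ with $\operatorname{tr}M\in\partial h(c)$. If $\operatorname{tr}M>0$, set $W=M/\operatorname{tr}M$. If $\operatorname{tr}M=0$, then $M=0$ and we may take $\alpha=0$ with any $W$.

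\textbf{Step 3 (optimality equivalences).} By convexity, $cP$ is a minimizer iff $G\in\partial\Phi_R^{\mathrm M}(cP)$. By Step 2 this means $G=P(\alpha W)$. Since $G=PS$ and $P^\top P=I_n$, this is equivalent to $S=\alpha W\in A(J_R^{\mathrm M},cP)$. By the trace-slab form, and since $S\succ0$ with $\operatorname{tr}S=\|G\|_*$, this is equivalent to $\|G\|_*\in\partial h(c)$. Finally, because $c>0$ is interior to $[0,\infty)$, the scalar first-order condition $\|G\|_*\in\partial h(c)$ is equivalent to $c\in\arg\min_{t\geq0}[h(t)-t\|G\|_*]$.

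\textbf{Step 4 (complete argmin).} Every nonzero minimizer $Q^\star$ has the form $cP$ with $c=\|Q^\star\|_{\mathrm{op}}>0$ by Theorem~\ref{thm:minimax-polar-reduction}, so Step 3 settles the nonzero minimizers. The remaining case is $Q=0$, which is where the $t=0$ endpoint of the scalar problem enters.

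At $Q=0$, the chain rule gives
\[
\partial\Phi_R^{\mathrm M}(0)=\{Y:\|Y\|_*\leq h_+(0)\},
\]
using that $\partial\|\cdot\|_{\mathrm{op}}(0)$ is the nuclear-norm unit ball and $\partial h(0)=(-\infty,h_+(0)]$ for the extension. So $0$ is a minimizer iff $\|G\|_*\leq h_+(0)$. This is exactly the condition for $t=0$ to minimize $h(t)-t\|G\|_*$ on $t\geq0$, and it matches $0=0\cdot P$. This yields \eqref{eq:minimax-complete-argmin}.

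\textbf{Main obstacle.} The step needing the most care is Step 2. It requires justifying the composite chain rule with equality (not just inclusion) for the extended $h$, and pinning down the operator-norm subdifferential at a matrix whose singular values are all equal, so that the full density-matrix set $\mathcal D_n$ appears. I would cite the standard result for unitarily invariant norms for the latter.
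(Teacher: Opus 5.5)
Your argument is correct. Steps 1--3 coincide with the paper's proof: the Dirac-mass reduction $\Phi_R^{\mathrm M}(Q)=\overline H_R(\|Q\|_{\mathrm{op}})$, the monotone convex chain rule, the description $\partial\|\cdot\|_{\mathrm{op}}(cP)=P\mathcal D_n$, the trace-slab rewriting, and the scalar first-order condition at the interior point $c>0$. For the operator-norm subdifferential, the paper proves it directly as the convex hull of the active gradients $Pxx^\top$, rather than citing the unitarily-invariant-norm formula as you do.

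The only real divergence is Step~4. The paper obtains the full argmin from the scalar reduction $J_R^{\mathrm M}(Q;G)\geq J_R^{\mathrm M}(tP;G)=\overline H_R(t)-t\|G\|_*$, where $t=\|Q\|_{\mathrm{op}}$, with equality iff $Q=tP$ (Lemma~\ref{lem:minimax-polar-domination}). This treats $t=0$ and $t>0$ uniformly and needs no first-order analysis at the origin. You instead invoke Theorem~\ref{thm:minimax-polar-reduction} for nonzero minimizers and compute $\partial\Phi_R^{\mathrm M}(0)$ separately. That route also works, and it makes explicit the threshold $\|G\|_*\leq h_+(0)$ for the zero update to be optimal.

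You should, however, fix one slip. For your constant extension $h(t)=h(0)$ on $t<0$, the left derivative at $0$ is $0$, so $\partial h(0)=[0,h_+(0)]$, not $(-\infty,h_+(0)]$. If negative $\alpha$ were allowed, the set $\{\alpha Y:\|Y\|_*\leq 1\}$ would be all of $\mathbb R^{m\times n}$, and your formula would not follow. With $\alpha\in[0,h_+(0)]$, the union of the nested balls $\alpha\{\|Y\|_*\leq 1\}$ is exactly $\{Y:\|Y\|_*\leq h_+(0)\}$. So your stated formula and conclusion are right, with $h_+(0)\in[0,\infty)$ because $h$ is nondecreasing.
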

\begin{figure}[t]
    \centering
    \includegraphics[width=0.8\textwidth]
    {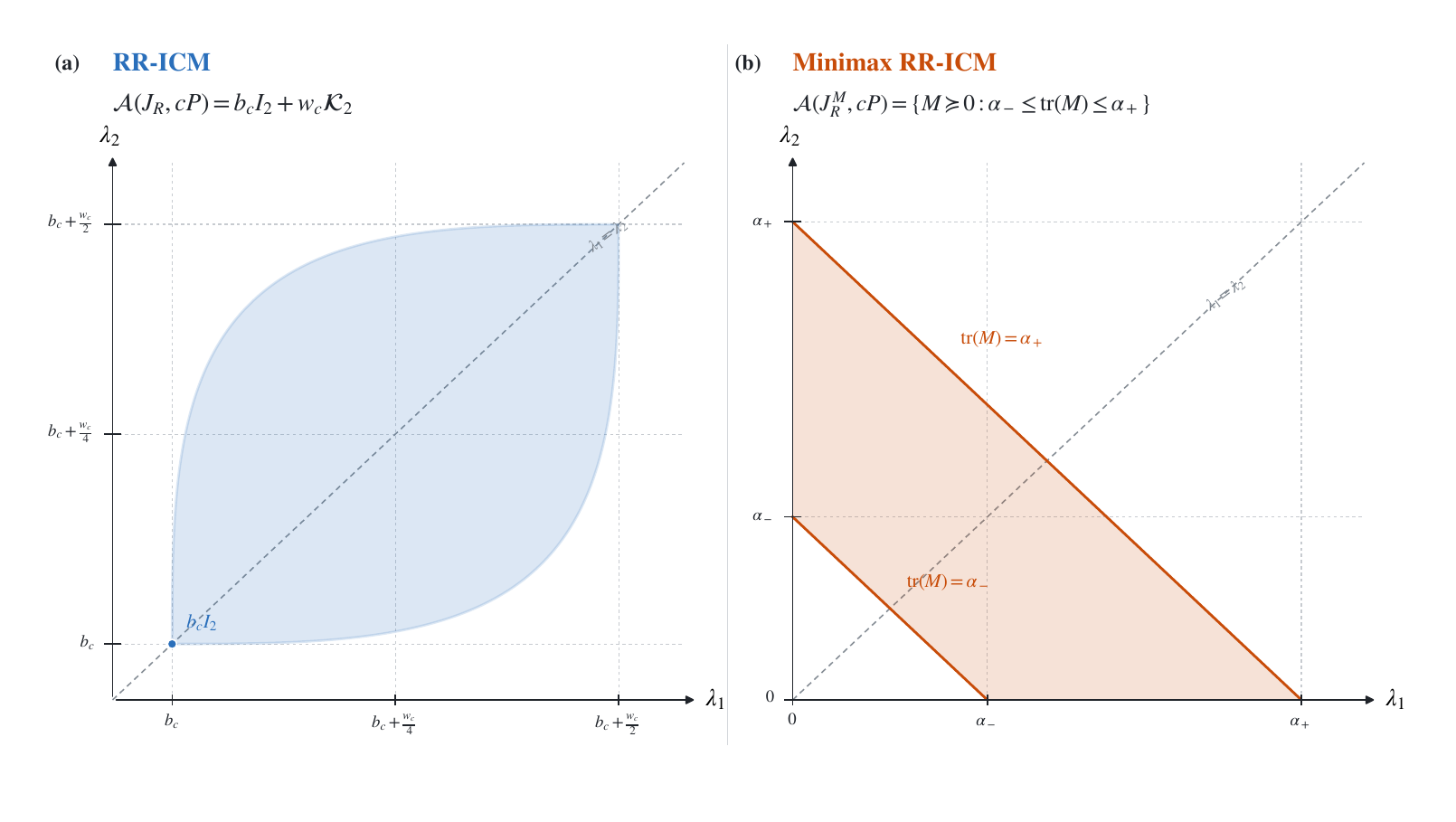}
    \caption{
    Two polar-factor admissibility geometries in dimension \(n=2\).
    Panel~\textnormal{(a)} shows the RR-ICM admissibility set
    \(b_cI_2+w_c\mathcal K_2\) in spectral coordinates. Its nonscalar
    extent is controlled by the atom--kink response capacity \(w_c\)
    and collapses to the singleton \(\{b_cI_2\}\) when \(w_c=0\).
    Panel~\textnormal{(b)} shows the Minimax RR-ICM admissibility set,
    where
    \(\alpha_-:=\min\partial\overline H_R(c)\) and
    \(\alpha_+:=\max\partial\overline H_R(c)\).
    Positive semidefiniteness confines the spectrum to the first
    quadrant, while trace matching restricts it to the slab
    \(\alpha_-\leq\lambda_1+\lambda_2\leq\alpha_+\).
    Thus, the RR-ICM restricts spectral shape through spherical-zonoid
    geometry, whereas the Minimax RR-ICM restricts only the total
    spectral mass.
    }
    \label{fig:two-admissibility-geometries}
\end{figure}

Theorem~\ref{thm:minimax-admissibility-trace} reveals an exact
shape--scale separation. Since
\[
    \frac{S}{\operatorname{tr}(S)}
    \in\mathcal D_n,
\]
the spectral shape of the positive polar factor \(S\) imposes no
additional restriction on Minimax admissibility. Exact optimality
reduces to the one-dimensional condition
\[
    \operatorname{tr}(S)
    =
    \|G\|_*
    \in\partial\overline H_R(c).
\]
Thus, the polar factor determines the optimal matrix direction, while
trace matching determines the admissible scale.

\subsection{Two Polar-Factor Admissibility Geometries}
\label{subsec:two-admissibility-geometries}

Combining Theorem~\ref{thm:main1} with
Theorem~\ref{thm:minimax-admissibility-trace} places the RR-ICM and the
Minimax RR-ICM within the same Polar-Factor Admissibility framework.
The following corollary isolates the distinct mechanisms through which
the two models support nonscalar gradient spectra.

\begin{corollary}[Two mechanisms of polar-factor admissibility]
\label{cor:two-admissibility-mechanisms}
For every \(c>0\),
\begin{align}
    A(J_R,cP)
    &=
    b_cI_n+w_c\mathcal K_n,
    \label{eq:comparison-average-admissibility}
    \\
    A(J_R^{\mathrm M},cP)
    &=
    \left\{
        M\in\mathbb S_+^n:
        \operatorname{tr}(M)
        \in\partial\overline H_R(c)
    \right\}.
    \label{eq:comparison-minimax-admissibility}
\end{align}
All nonscalar freedom in the RR-ICM is carried by
\(w_c\mathcal K_n\). In particular, if the law of \(R\) is atomless,
then
\[
    A(J_R,cP)=\{b_cI_n\}.
\]
By contrast, the Minimax RR-ICM admits every positive-semidefinite
matrix whose trace belongs to \(\partial\overline H_R(c)\). Whenever
\(n\geq2\) and \(\partial\overline H_R(c)\) contains a positive value,
this admissibility set contains nonscalar matrices without requiring
radial atoms to align with curvature kinks.
\end{corollary}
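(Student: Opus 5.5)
The two displayed identities are restatements of earlier theorems, so the proof is mostly bookkeeping. The one piece of new content is the existence of nonscalar admissible matrices in the minimax model.

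\textbf{The two identities.} Identity \eqref{eq:comparison-average-admissibility} is exactly the admissibility formula of Theorem~\ref{thm:main1}, and \eqref{eq:comparison-minimax-admissibility} is the second line of \eqref{eq:minimax-admissibility-trace-slab} in Theorem~\ref{thm:minimax-admissibility-trace}. I would cite both directly, taking the integrability hypotheses of Theorem~\ref{thm:main1} as standing whenever \eqref{eq:comparison-average-admissibility} is invoked.

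\textbf{Nonscalar freedom in the RR-ICM.} The statement that all nonscalar freedom is carried by $w_c\mathcal K_n$ follows because $b_cI_n$ is scalar. Hence $A(J_R,cP)$ contains a nonscalar matrix if and only if $w_c\mathcal K_n$ does.

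\textbf{The atomless case.} If the law of $R$ is atomless, then $\mathbb P(R=r)=0$ for every $r>0$. Every term in the sum defining $w_c$ therefore vanishes, so $w_c=0$, exactly as in Corollary~\ref{cor:atomless-zero-capacity-rigidity}. Then $w_c\mathcal K_n=\{0\}$, because $\mathcal K_n$ is nonempty (take $a\equiv0$). This gives $A(J_R,cP)=\{b_cI_n\}$.

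\textbf{Nonscalar matrices in the minimax set.} Suppose $n\geq2$ and $\alpha\in\partial\overline H_R(c)$ with $\alpha>0$. I would take $M=\alpha e_1e_1^\top$. This matrix is positive semidefinite with trace $\alpha$, so it lies in the set \eqref{eq:comparison-minimax-admissibility}. It is not scalar, since its eigenvalues are $\alpha$ and $0$ and $n\geq2$.

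No atomic or kink condition enters this construction. This is worth stating explicitly, since an atomless $R$ makes $\overline H_R$ differentiable at $c$ (so $\partial\overline H_R(c)$ is a singleton) and yet the set is still nonscalar. For example, the whole trace-$\alpha$ face $\alpha\mathcal D_n$ lies in it.

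\textbf{Main obstacle.} There is essentially none. The only care needed is that $\mathcal K_n$ is nonempty, so that $0\cdot\mathcal K_n=\{0\}$, and that the hypotheses of Theorem~\ref{thm:main1} are assumed when it is invoked.
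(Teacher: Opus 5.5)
Your proposal is correct and matches the paper's proof: you cite Theorem~\ref{thm:main1} and Theorem~\ref{thm:minimax-admissibility-trace} for the two identities, use $w_c=0$ to collapse the set in the atomless case, and exhibit an explicit trace-$\alpha$ witness in the minimax set. The only difference is the witness: the paper uses the positive definite nonscalar matrix $\alpha\,\operatorname{diag}(2,1,\ldots,1)/(n+1)$, which is slightly more relevant since actual positive polar factors satisfy $S\succ0$, whereas your rank-one $\alpha e_1e_1^\top$ suffices for the statement as written, and your observation that all of $\alpha\mathcal D_n$ is admissible already covers the positive definite case.
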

Corollary~\ref{cor:two-admissibility-mechanisms} provides the central
geometric bridge between the curvature-based ICM viewpoint and the
spectral geometry of Muon. In the RR-ICM, admissible nonscalar spectra
are confined to a translated and dilated spherical zonoid whose size
is controlled by atom--kink resonance. In the Minimax RR-ICM, the
admissible spectra instead fill a positive-semidefinite trace slab, so
only their total spectral mass is constrained.
The two models therefore generate nonscalar spectral freedom through
different mechanisms: atom--kink resonance in the RR-ICM, and the
set-valued subdifferential of the operator norm at the
equal-singular-value point \(cP\) in the Minimax RR-ICM. Consequently,
making the radial law atomless collapses the RR-ICM geometry but leaves
the matrix-shape freedom of the Minimax RR-ICM intact.

\section{Proofs}
\label{sec:proofs}

\subsection{Proofs for Section~\ref{subsec:RR-ICM}}

We use the standard finite-dimensional conventions for convex
subdifferentials, composition chain rules, and pointwise-supremum rules from
Rockafellar~\cite{rockafellar1970convex}.

\begin{proof}[Proof of Theorem~\ref{thm:main1}]
Retain the convention on the event \(\{R=0\}\). We first verify the
integrability needed to interchange expectation and subdifferentiation.
The growth assumption implies that, for every fixed \(C>0\),
\begin{equation}
    R H_+(CR)
    \leq
    C_H R+C_H C^{p-1}R^p.
    \label{eq:proof-all-scale-integrability}
\end{equation}
Since \(p\geq1\) and \(\mathbb E[R^p]<\infty\), we also have
\(\mathbb E[R]<\infty\). Thus, the right-hand side of
\eqref{eq:proof-all-scale-integrability} is integrable. In particular,
\(b_c\) and all matrix expectations below are finite.

Let
\[
    \Phi_R(Q)
    :=
    \mathbb E_{R,\zeta} H\!\left(R\lVert Q\zeta\rVert_2\right),
\]
so that \(J_R(Q;G)=-\langle G,Q\rangle+\Phi_R(Q)\). At \(Q=cP\),
the identity \(P^\top P=I_n\) gives \(\lVert cP\zeta\rVert_2=c\). For
\(R>0\), the subdifferential chain rule yields
\[
    \partial_Q H\!\left(R\lVert Q\zeta\rVert_2\right)\big|_{Q=cP}
    =
    \left\{
        R\eta\,P\zeta\zeta^\top:
        \eta\in\partial H(cR)
    \right\}.
\]
If \(\lVert Q-cP\rVert_{\mathrm{op}}<\varepsilon<c\), then
\[
    (c-\varepsilon)R
    \leq R\lVert Q\zeta\rVert_2
    \leq (c+\varepsilon)R.
\]
Local subgradients on this neighborhood admit a uniform integrable bound.
Indeed, for \(R>0\), take any
\[
    V\in\partial_Q H\!\left(R\lVert Q\zeta\rVert_2\right),
\]
Since \(c-\varepsilon>0\), we have \(Q\zeta\neq0\), and there exists
\(
\eta\in\partial H(R\lVert Q\zeta\rVert_2)
\)
such that

\[
    V
    =
    R\eta\,
    \frac{Q\zeta}{\lVert Q\zeta\rVert_2}\zeta^\top.
\]
Because \(H\) is convex and nondecreasing, its subgradients are nonnegative
and its right derivative is nondecreasing. Therefore,
\begin{align}
    \lVert V\rVert_F
    &=R\eta \notag\\
    &\leq R H_+\!\left((c+\varepsilon)R\right) \notag\\
    &\leq C_HR+C_H(c+\varepsilon)^{p-1}R^p.
    \label{eq:local-subgradient-integrable-bound}
\end{align}
When $R=0$, the integrand is the constant $H(0)$, and its
$Q$-subgradient is zero. The right-hand side of \eqref{eq:local-subgradient-integrable-bound}
is integrable.
Moreover, the mapping
\[
(R,\zeta,Q)
\longmapsto
H\bigl(R\|Q\zeta\|_2\bigr)
\]
is measurable in $(R,\zeta)$ for every fixed $Q$, and is continuous
and convex in $Q$ for every fixed $(R,\zeta)$. 
Moreover, for
$\|Q-cP\|_{\mathrm{op}}<\varepsilon$,
\[
    \left|
        H\!\left(R\|Q\zeta\|_2\right)
    \right|
    \leq
    |H(0)|
    +
    (c+\varepsilon)R
    H_+\!\left((c+\varepsilon)R\right),
\]
whose expectation is finite by the same growth and moment assumptions.
Hence it is a convex
normal integrand. 

Together with the integrable local Lipschitz modulus
established above, the hypotheses of the interchange theorem for
subdifferentials of integral convex functionals
\cite{rockafellarwets1982interchange}
are satisfied. Therefore,
\[
\partial\Phi_R(cP)
=
\left\{
\mathbb E_{R,\zeta}[V(R,\zeta)]:
V(R,\zeta)\in
\partial_Q H\bigl(R\|Q\zeta\|_2\bigr)\big|_{Q=cP}
\ \text{a.s.},
\quad
V\ \text{measurable}
\right\}.
\]
Using the pointwise subdifferential formula above and the definition
of the Polar-Factor Admissibility Set, we obtain
\begin{equation}
    A(J_R,cP)
    =
    \left\{
    \mathbb E_{R,\zeta}
    \left[R\eta(R,\zeta)\zeta\zeta^\top\right]:
    \begin{array}{l}
    \eta\text{ measurable},\\[-0.2em]
    \eta(R,\zeta)\in\partial H(cR)
    \ \text{a.e. on }\{R>0\}
    \end{array}
    \right\}.
    \label{eq:admissibility-selector-representation}
\end{equation}
Equivalently, \(\partial\Phi_R(cP)=P A(J_R,cP)\). Thus, when \(G=PS\),
the first-order optimality condition can be written as
\[
    cP\in\operatorname*{arg\,min}_Q J_R(Q;G)
    \quad\Longleftrightarrow\quad
    S\in A(J_R,cP).
\]

Because \(H\) is a finite convex function,
\[
    \partial H(t)
    =
    [H_-(t),H_+(t)]
    =
    H_-(t)+\Delta H(t)[0,1],
    \qquad t>0.
\]
Every selector in~\eqref{eq:admissibility-selector-representation} can be
written as
\[
    \eta(R,\zeta)
    =
    H_-(cR)+a(R,\zeta)\Delta H(cR),
    \qquad 0\leq a(R,\zeta)\leq 1.
\]
By independence of \(R\) and \(\zeta\) and spherical symmetry,
\begin{align}
    \mathbb E_{R,\zeta}
    \left[R H_-(cR)\zeta\zeta^\top\right]
    &=
    \mathbb E[R H_-(cR)]\,
    \mathbb E_\zeta[\zeta\zeta^\top] \notag\\
    &=b_c I_n.
    \label{eq:proof-admissibility-baseline}
\end{align}
where we used \(\mathbb E_\zeta[\zeta\zeta^\top]=I_n/n\).

Now let \(W_c:=R\Delta H(cR)\). A finite convex function has at most
countably many nondifferentiability points, and \(W_c\) can be nonzero only
when \(R>0\) and \(cR\in\operatorname{Kink}(H)\). Therefore,
\begin{align}
    \mathbb E[W_c]
    &=
    \sum_{\substack{r>0:\\cr\in\operatorname{Kink}(H)}}
    \mathbb P(R=r)\,r\Delta H(cr)
    =w_c.
    \label{eq:proof-resonance-expectation}
\end{align}
Moreover, \(0\leq W_c\leq R H_+(cR)\), so \(W_c\) is integrable.

If \(w_c=0\), then \(W_c=0\) almost surely and the nonbaseline part of
\eqref{eq:admissibility-selector-representation} vanishes. If \(w_c>0\),
then for any measurable selector \(a(R,\zeta)\in[0,1]\), define
\[
    \bar a(\zeta)
    :=
    \frac{\mathbb E_R[W_c a(R,\zeta)]}{w_c}.
\]
Then \(\bar a:\mathbb S^{n-1}\to[0,1]\) is measurable, and Fubini's theorem~\cite{royden1988real}
gives
\begin{align}
    \mathbb E_{R,\zeta}
    \left[W_c a(R,\zeta)\zeta\zeta^\top\right]
    &=
    w_c\,
    \mathbb E_\zeta
    \left[\bar a(\zeta)\zeta\zeta^\top\right]
    \in w_c\mathcal K_n.
    \label{eq:proof-admissibility-remainder}
\end{align}
Conversely, for any \(\bar a:\mathbb S^{n-1}\to[0,1]\), choosing
\(a(R,\zeta)=\bar a(\zeta)\) realizes the corresponding arbitrary element
of \(w_c\mathcal K_n\). Combining this observation with
\eqref{eq:proof-admissibility-baseline} yields
\[
    A(J_R,cP)=b_cI_n+w_c\mathcal K_n,
\]
as claimed.
\end{proof}

\subsection{Proofs for Section~\ref{subsec:spectral-capacity-barriers}}

\begin{proof}[Proof of Corollary~\ref{cor:exact-admissibility-spectral-capacity}]
By exact optimality of \(cP\), the first-order condition, and
Theorem~\ref{thm:main1}, there exists \(K\in\mathcal K_n\) such that
\begin{equation}
    S=b_cI_n+w_cK.
    \label{eq:proof-spectral-decomposition}
\end{equation}
By the definition of the spherical zonoid, there is a measurable function
\(a:\mathbb S^{n-1}\to[0,1]\) such that
\[
    K=\mathbb E_\zeta[a(\zeta)\zeta\zeta^\top].
\]
Hence,
\[
    0\preceq K
    \preceq\mathbb E_\zeta[\zeta\zeta^\top]
    =\frac1nI_n.
\]
Since \(w_c\geq0\), substituting this Loewner-order bound into
\eqref{eq:proof-spectral-decomposition} and using \(\kappa_c=w_c/n\) gives
\[
    b_cI_n\preceq S\preceq(b_c+\kappa_c)I_n.
\]
Thus \(\lambda_{\min}(S)\geq b_c\) and
\(\lambda_{\max}(S)\leq b_c+\kappa_c\), so
\[
    \operatorname{diam}_{\mathrm{eig}}(S)
    =\lambda_{\max}(S)-\lambda_{\min}(S)
    \leq\kappa_c.
\]
Both conclusions follow.
\end{proof}

\begin{proof}[Proof of Corollary~\ref{cor:atomless-zero-capacity-rigidity}]
The kink set \(\operatorname{Kink}(H)\) of a finite convex function is at
most countable. Hence the resonant-radius set
\[
    \mathcal R_c
    :=
    \{r>0:cr\in\operatorname{Kink}(H)\}
\]
is also at most countable. If the law of \(R\) is atomless, then
\(\mathbb P(R=r)=0\) for every \(r\in\mathcal R_c\). By the definition of
\(w_c\),
\[
    w_c
    =
    \sum_{r\in\mathcal R_c}
    \mathbb P(R=r)\,r\Delta H(cr)
    =0,
\]
Thus \(\kappa_c=w_c/n=0\). If \(cP\) is an exact minimizer,
Theorem~\ref{thm:main1} further gives
\[
    S\in b_cI_n+w_c\mathcal K_n=\{b_cI_n\}.
\]
Therefore \(S=b_cI_n\), all of its eigenvalues coincide, and
\[
    \operatorname{diam}_{\mathrm{eig}}(S)=0.
\]
\end{proof}

\subsection{Proofs for Section~\ref{subsec:minimax}}
\label{sec:proofs-minimax-direction}

The proof of Theorem~\ref{thm:minimax-polar-reduction} has two independent
steps. The first eliminates the supremum over directional laws. The second
shows that, at fixed operator norm, the gradient's linear term is uniquely
maximized by the polar direction. We isolate these steps in two lemmas.

\begin{lemma}[Operator-norm representation of the worst direction]
\label{lem:minimax-worst-direction}
For every \(Q\in\mathbb R^{m\times n}\),
\begin{equation}
    \sup_{\nu\in\mathcal P(\mathbb S^{n-1})}
    \mathbb E_{\zeta\sim\nu}
    \overline H_R(\lVert Q\zeta\rVert_2)
    =
    \overline H_R(\lVert Q\rVert_{\mathrm{op}}).
    \label{eq:minimax-sup-opnorm}
\end{equation}
\end{lemma}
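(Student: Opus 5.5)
The identity follows from a two-sided bound, using only the monotonicity of \(\overline H_R\) and the fact that Dirac masses belong to \(\mathcal P(\mathbb S^{n-1})\).

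\emph{Step 1: \(\overline H_R\) is nondecreasing.} Under the standing assumptions, \(H\) is nondecreasing and \(R\geq0\). Hence for \(0\leq s\leq t\) we have \(Rs\leq Rt\) pointwise, so \(H(Rs)\leq H(Rt)\). Both expectations are finite because \(\overline H_R\) is finite on \([0,\infty)\), so taking expectations gives \(\overline H_R(s)\leq\overline H_R(t)\). Convexity of \(\overline H_R\) also follows from convexity of \(H\), but it is not needed here.

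\emph{Step 2: upper bound.} For every \(\zeta\in\mathbb S^{n-1}\) we have \(\lVert Q\zeta\rVert_2\leq\lVert Q\rVert_{\mathrm{op}}\). Step 1 then gives \(\overline H_R(\lVert Q\zeta\rVert_2)\leq\overline H_R(\lVert Q\rVert_{\mathrm{op}})\) pointwise on the sphere. Integrating against any \(\nu\in\mathcal P(\mathbb S^{n-1})\) bounds the expectation by \(\overline H_R(\lVert Q\rVert_{\mathrm{op}})\). This requires the integrand \(\zeta\mapsto\overline H_R(\lVert Q\zeta\rVert_2)\) to be Borel measurable and to have a well-defined expectation. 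Measurability holds because \(\overline H_R\) is a finite monotone (indeed convex) function composed with a continuous map. The integrand is bounded above by the constant \(\overline H_R(\lVert Q\rVert_{\mathrm{op}})\) and below by \(\overline H_R(0)\), so it is bounded and every expectation is finite. Taking the supremum over \(\nu\) gives ``\(\leq\)''.

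\emph{Step 3: attainment.} By compactness of the unit sphere, \(\lVert Q\rVert_{\mathrm{op}}=\max_{\zeta\in\mathbb S^{n-1}}\lVert Q\zeta\rVert_2\) is attained at some \(\zeta_\star\), for instance a top right singular vector of \(Q\). This also covers \(Q=0\), where any unit vector works. Choosing \(\nu=\delta_{\zeta_\star}\in\mathcal P(\mathbb S^{n-1})\) gives \(\mathbb E_{\zeta\sim\nu}\overline H_R(\lVert Q\zeta\rVert_2)=\overline H_R(\lVert Q\rVert_{\mathrm{op}})\). So the supremum is attained and equality holds in~\eqref{eq:minimax-sup-opnorm}.

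There is no substantial obstacle. The only points needing care are the monotonicity of \(\overline H_R\), which is where nondecreasing \(H\) and \(R\geq0\) enter, and the measurability and boundedness of the integrand, which make the expectations well defined for arbitrary Borel laws \(\nu\).
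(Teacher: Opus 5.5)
Your proof is correct and follows the paper's argument: the upper bound comes from $\lVert Q\zeta\rVert_2\leq\lVert Q\rVert_{\mathrm{op}}$ and the monotonicity of $\overline H_R$, and attainment comes from a Dirac mass at a maximizing direction supplied by compactness of the sphere. Your explicit checks that $\overline H_R$ is nondecreasing and that the integrand is Borel measurable and bounded fill in details the paper leaves implicit.
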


\begin{proof}
Let \(t:=\lVert Q\rVert_{\mathrm{op}}\). For every
\(\zeta\in\mathbb S^{n-1}\), the definition of the operator norm gives
\(\lVert Q\zeta\rVert_2\leq t\). By monotonicity of \(\overline H_R\), for
every \(\nu\in\mathcal P(\mathbb S^{n-1})\),
\[
    \mathbb E_{\zeta\sim\nu}
    \overline H_R(\lVert Q\zeta\rVert_2)
    \leq \overline H_R(t).
\]
Thus, \(\overline H_R(t)\) is an upper bound for the supremum. To show that
it is attained, compactness of the finite-dimensional unit sphere provides
\(\zeta_Q\in\mathbb S^{n-1}\) such that
\(\lVert Q\zeta_Q\rVert_2=t\). Since the Dirac measure
\(\delta_{\zeta_Q}\) belongs to \(\mathcal P(\mathbb S^{n-1})\),
\[
    \mathbb E_{\zeta\sim\delta_{\zeta_Q}}
    \overline H_R(\lVert Q\zeta\rVert_2)
    =\overline H_R(t).
\]
The upper bound is therefore attained, proving
\eqref{eq:minimax-sup-opnorm}.
\end{proof}

\begin{lemma}[Polar-factor domination]
\label{lem:minimax-polar-domination}
For any \(Q\in\mathbb R^{m\times n}\), let
\(t:=\lVert Q\rVert_{\mathrm{op}}\). Then
\begin{equation}
    \langle G,Q\rangle
    \leq t\lVert G\rVert_*
    =\langle G,tP\rangle.
    \label{eq:polar-linear-domination}
\end{equation}
Equality in~\eqref{eq:polar-linear-domination} holds if and only if
\(Q=tP\). Consequently,
\begin{equation}
    J_R^{\mathrm M}(Q;G)
    \geq J_R^{\mathrm M}(tP;G),
    \label{eq:minimax-polar-domination}
\end{equation}
and equality in~\eqref{eq:minimax-polar-domination} holds if and only if
\(Q=tP\).
\end{lemma}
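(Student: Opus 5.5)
The plan is to prove the linear inequality by a singular-value computation, read off the equality case column by column, and then feed the result into Lemma~\ref{lem:minimax-worst-direction}. Write the compact SVD $G=U\Sigma V^\top$ with $U=[u_1,\dots,u_n]\in\mathbb R^{m\times n}$ having orthonormal columns and $V=[v_1,\dots,v_n]\in\mathbb R^{n\times n}$ orthogonal. Since $G$ has full column rank, $\Sigma=\operatorname{diag}(\sigma_1,\dots,\sigma_n)$ with every $\sigma_i>0$, and $P=UV^\top$.

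For the inequality, expand
\[
\langle G,Q\rangle=\operatorname{tr}(V\Sigma U^\top Q)=\sum_{i=1}^n\sigma_i\,u_i^\top Qv_i .
\]
By Cauchy--Schwarz and the definition of the operator norm, $u_i^\top Qv_i\le\lVert Qv_i\rVert_2\le t$. Hence $\langle G,Q\rangle\le t\sum_i\sigma_i=t\lVert G\rVert_*$. Separately, $\langle G,tP\rangle=t\operatorname{tr}(V\Sigma U^\top UV^\top)=t\operatorname{tr}(\Sigma)=t\lVert G\rVert_*$, which gives the identity on the right of \eqref{eq:polar-linear-domination}.

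For the equality case, the "if" direction is exactly this last computation. For "only if," the case $t=0$ is trivial, since then $Q=0=tP$. For $t>0$, suppose equality holds. Because every $\sigma_i>0$, each termwise inequality must be tight: $u_i^\top Qv_i=t$ for every $i$. Combining this with $\lVert Qv_i\rVert_2\le t$ and $\lVert u_i\rVert_2=1$, equality in Cauchy--Schwarz forces $Qv_i=tu_i$. Thus $QV=tU$, and since $V$ is square orthogonal, $Q=tUV^\top=tP$. This is the step where full column rank is essential: a zero singular value would leave $Qv_i$ unconstrained, and squareness of $V$ is what lets the column identities determine all of $Q$.

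Finally, I apply Lemma~\ref{lem:minimax-worst-direction} to both $Q$ and $tP$. We have $\lVert tP\rVert_{\mathrm{op}}=t$ because $P^\top P=I_n$. Therefore
\[
J_R^{\mathrm M}(Q;G)=-\langle G,Q\rangle+\overline H_R(t)\ \ge\ -\langle G,tP\rangle+\overline H_R(t)=J_R^{\mathrm M}(tP;G).
\]
The curvature terms coincide, so equality in \eqref{eq:minimax-polar-domination} is equivalent to equality in \eqref{eq:polar-linear-domination}, that is, to $Q=tP$. I expect no real obstacle; the only delicate point is the strict-positivity argument in the equality case.
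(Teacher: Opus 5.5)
Your proposal is correct and matches the paper's proof essentially step for step. It uses the same expansion $\langle G,Q\rangle=\sum_i\sigma_i u_i^\top Qv_i$ bounded by Cauchy--Schwarz and the operator norm, the same use of $\sigma_i>0$ and Cauchy--Schwarz equality to force $Qv_i=tu_i$, and the same appeal to Lemma~\ref{lem:minimax-worst-direction} with $\lVert tP\rVert_{\mathrm{op}}=t$ to show that $Q$ and $tP$ share the curvature term.
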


\begin{proof}
Take the compact singular value decomposition
\[
    G=U\Sigma V^\top,
    \qquad
    \Sigma=\operatorname{diag}(\sigma_1,\ldots,\sigma_n).
\]
The standing full-column-rank assumption gives \(\sigma_i>0\), and the polar
factor is \(P=UV^\top\). Let \(u_i\) and \(v_i\) be the corresponding left
and right singular vectors. We seek to maximize \(\langle G,Q\rangle\) over
matrices with operator norm at most \(t\). The singular value decomposition
gives
\begin{align}
    \langle G,Q\rangle
    &=\sum_{i=1}^n \sigma_i u_i^\top Qv_i \notag\\
    &\leq\sum_{i=1}^n \sigma_i\lVert Qv_i\rVert_2 \notag\\
    &\leq t\sum_{i=1}^n\sigma_i
    =t\lVert G\rVert_*.
    \label{eq:proof-polar-linear-bound}
\end{align}
On the other hand, \(Pv_i=u_i\), so
\[
    \langle G,tP\rangle
    =t\sum_{i=1}^n\sigma_i
    =t\lVert G\rVert_*.
\]
This proves~\eqref{eq:polar-linear-domination}.

It remains to identify the unique equality case; this is where full column
rank is essential. If \(t=0\), then \(Q=0=tP\). Suppose \(t>0\). Since every
\(\sigma_i\) is strictly positive, equality in
\eqref{eq:proof-polar-linear-bound} holds if and only if, for every
\(i=1,\ldots,n\),
\[
    u_i^\top Qv_i=\lVert Qv_i\rVert_2=t.
\]
The equality condition in the Cauchy--Schwarz inequality then gives
\(Qv_i=tu_i\). Since \(v_1,\ldots,v_n\) form an orthonormal basis of
\(\mathbb R^n\),
\[
    Q=t\sum_{i=1}^n u_iv_i^\top=tUV^\top=tP.
\]
Conversely, \(Q=tP\) makes every inequality above an equality. Thus, equality
in the linear domination bound holds exactly when \(Q=tP\).

Finally, Lemma~\ref{lem:minimax-worst-direction} and
\(\lVert tP\rVert_{\mathrm{op}}=t\) show that \(Q\) and \(tP\) have the same
worst-direction curvature value. Substituting
\eqref{eq:polar-linear-domination} into the objective proves
\eqref{eq:minimax-polar-domination}, with the same equality condition.
\end{proof}

\pagebreak[3]
\begin{proof}[Proof of Theorem~\ref{thm:minimax-polar-reduction}]
Let \(Q^\star\neq0\) be a global minimizer and set
\(t^\star:=\lVert Q^\star\rVert_{\mathrm{op}}>0\). Polar-factor domination
and global optimality give, respectively,
\[
    J_R^{\mathrm M}(t^\star P;G)
    \leq J_R^{\mathrm M}(Q^\star;G)
    \leq J_R^{\mathrm M}(t^\star P;G).
\]
The two endpoints must therefore be equal. By the unique equality case in
Lemma~\ref{lem:minimax-polar-domination}, \(Q^\star=t^\star P\). Dividing by
\(t^\star=\lVert Q^\star\rVert_{\mathrm{op}}\) gives
\[
    \frac{Q^\star}{\lVert Q^\star\rVert_{\mathrm{op}}}=P,
\]
as claimed.
\end{proof}

\subsection{Proofs for Section~\ref{subsec:minimax-admissibility}}

We first isolate the central geometric fact: at a scaled polar point, all
subgradients of the operator norm are parameterized by density matrices. This
formula is the equal-singular-value specialization of classical
subdifferential characterizations for unitarily invariant matrix norms
~\cite{watson1992characterization,lewis1995convex}. We include a direct proof
for completeness.

\begin{lemma}[Operator-norm subdifferential at an equal-singular-value point]
\label{lem:opnorm-subdiff-polar}
For every \(c>0\),
\begin{equation}
    \partial\lVert\cdot\rVert_{\mathrm{op}}(cP)
    =
    P\mathcal D_n.
    \label{eq:opnorm-subdiff-polar}
\end{equation}
\end{lemma}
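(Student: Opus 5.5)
We use the standard description of the subdifferential of a norm: for \(X\neq0\),
\[
    \partial\lVert\cdot\rVert_{\mathrm{op}}(X)
    =
    \bigl\{Y:\ \lVert Y\rVert_*\leq1,\ \langle Y,X\rangle=\lVert X\rVert_{\mathrm{op}}\bigr\},
\]
which follows from the duality between the operator norm and the nuclear norm on \(\mathbb R^{m\times n}\). With \(X=cP\) and \(\lVert cP\rVert_{\mathrm{op}}=c\), the task reduces to showing
\[
    \bigl\{Y:\lVert Y\rVert_*\leq1,\ \langle Y,P\rangle=1\bigr\}=P\mathcal D_n .
\]

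\emph{Inclusion \(\supseteq\).} Let \(Y=PW\) with \(W\in\mathcal D_n\). Since \(P^\top P=I_n\),
\[
    \langle PW,P\rangle=\operatorname{tr}(W^\top P^\top P)=\operatorname{tr}(W)=1 .
\]
Moreover, \((PW)^\top(PW)=W^2\), so the singular values of \(PW\) are the eigenvalues of \(W\). Hence \(\lVert PW\rVert_*=\operatorname{tr}(W)=1\).

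\emph{Inclusion \(\subseteq\).} This direction carries the main content. Let \(Y\) satisfy \(\lVert Y\rVert_*\leq1\) and \(\langle Y,P\rangle=1\). Decompose
\[
    Y=PP^\top Y+(I-PP^\top)Y=PM+Y_\perp,\qquad M:=P^\top Y\in\mathbb R^{n\times n}.
\]
Then \(\langle Y,P\rangle=\operatorname{tr}(M)\). The von Neumann trace inequality gives \(\operatorname{tr}(M)\leq\lVert M\rVert_*\). Since \(P^\top\) is a contraction, \(\lVert M\rVert_*\leq\lVert Y\rVert_*\leq1\). Combining these with \(\operatorname{tr}(M)=1\) forces
\[
    \operatorname{tr}(M)=\lVert M\rVert_*=1 .
\]

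The key step is the equality case \(\operatorname{tr}(M)=\lVert M\rVert_*\), which forces \(M\) to be symmetric positive semidefinite. We would prove it via the real polar decomposition \(M=O|M|\), with \(O\) orthogonal and \(|M|=(M^\top M)^{1/2}\). Writing \(|M|=\sum_i s_i e_ie_i^\top\) gives
\[
    \operatorname{tr}(M)=\sum_i s_i\,e_i^\top Oe_i\leq\sum_i s_i ,
\]
with equality iff \(Oe_i=e_i\) for every \(i\) with \(s_i>0\). In that case \(M=O|M|=|M|\succeq0\), so \(M=W\in\mathcal D_n\).

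It remains to show \(Y_\perp=0\). Here
\[
    Y^\top Y=M^\top M+Y_\perp^\top Y_\perp=W^2+Y_\perp^\top Y_\perp ,
\]
since \(P^\top(I-PP^\top)=0\). Operator monotonicity of the square root then gives
\[
    \lVert Y\rVert_*=\operatorname{tr}\bigl((W^2+Y_\perp^\top Y_\perp)^{1/2}\bigr)\geq\operatorname{tr}(W)=1 .
\]
Equality is forced because \(\lVert Y\rVert_*\leq1\). Strict concavity-type monotonicity then yields \(Y_\perp^\top Y_\perp=0\): if \(A\succeq B\succeq0\) and \(\operatorname{tr}A^{1/2}=\operatorname{tr}B^{1/2}\), then \(A^{1/2}-B^{1/2}\succeq0\) has trace zero, hence \(A^{1/2}=B^{1/2}\) and \(A=B\). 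So \(Y_\perp=0\) and \(Y=PW\).

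We expect this equality-case bookkeeping to be the main obstacle, in particular justifying \(\lVert P^\top Y\rVert_*\leq\lVert Y\rVert_*\) and the orthogonal-complement step cleanly. An alternative route avoids the decomposition \(Y=PM+Y_\perp\) and argues through the equality case of \(\langle Y,P\rangle\leq\lVert Y\rVert_*\lVert P\rVert_{\mathrm{op}}\), using the SVD of \(Y\) directly: if \(Y=\sum_k\tau_k x_ky_k^\top\), then \(x_k^\top Py_k=1\) for every \(\tau_k>0\). This forces \(x_k=Py_k\), so \(Y=P\sum_k\tau_k y_ky_k^\top=PW\) with \(W\in\mathcal D_n\). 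We would adopt this second route as the main line, since it uses only Cauchy--Schwarz equality and \(\lVert Py\rVert_2=1\).
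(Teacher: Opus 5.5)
Your proof is correct, but it takes a different route from the paper's. The paper writes $\lVert Q\rVert_{\mathrm{op}}=\max_{x,y}\langle Q,yx^\top\rangle$ over the compact set $\mathbb S^{n-1}\times\mathbb S^{m-1}$ and identifies the active pairs at $cP$ as $y=Px$. It then applies the pointwise-maximum rule (subdifferential equals convex hull of active gradients) to obtain $P\operatorname{conv}\{xx^\top\}$. Finally, it identifies that hull with $\mathcal D_n$ by spectral decomposition. You instead start from the dual-norm description $\{Y:\lVert Y\rVert_*\leq1,\ \langle Y,X\rangle=\lVert X\rVert_{\mathrm{op}}\}$ and settle the equality case of trace duality directly.

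Your second route is the more elementary of the two. It needs only operator/nuclear duality and the equality case of Cauchy--Schwarz. The paper's one-line appeal to the max rule is, for an infinite compact index family, a Danskin--Valadier-type theorem, with closedness of the hull coming from compactness of the active-gradient set. Your SVD argument also exhibits the convex combination explicitly: the terms $\tau_k Py_ky_k^\top$ are exactly weighted active gradients. This makes the separate identification $\operatorname{conv}\{xx^\top:x\in\mathbb S^{n-1}\}=\mathcal D_n$ unnecessary and gives a representation with at most $n$ terms for free. When writing it up, say explicitly that the same chain of equalities forces $\sum_k\tau_k=1$, since that is what gives $\operatorname{tr}W=1$.

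The paper's route, in exchange, is shorter. It also makes visible why density matrices appear, namely as mixtures of worst-case directions $x$, which matches the worst-direction picture behind the minimax model. Your first route (splitting $Y=PM+Y_\perp$ and using operator monotonicity of the square root) is also valid, but it is redundant once you adopt the second.
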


\begin{proof}
The operator norm is the pointwise maximum of linear functions indexed by a
compact set:
\begin{equation}
    \lVert Q\rVert_{\mathrm{op}}
    =
    \max_{\substack{x\in\mathbb S^{n-1}\\
                    y\in\mathbb S^{m-1}}}
    \langle Q,yx^\top\rangle.
    \label{eq:opnorm-support-representation}
\end{equation}
At \(Q=cP\), the identity \(P^\top P=I_n\) gives
\[
    \langle cP,yx^\top\rangle
    =
    c\,y^\top Px
    \leq c.
\]
Because \(Px\) is a unit vector, equality holds if and only if \(y=Px\).
Thus, the gradients of all active linear functions in
\eqref{eq:opnorm-support-representation} at \(cP\) are
\[
    (Px)x^\top=Pxx^\top,
    \qquad x\in\mathbb S^{n-1}.
\]
The subdifferential of a pointwise maximum is the convex hull of the active
gradients. Hence,
\[
    \partial\lVert\cdot\rVert_{\mathrm{op}}(cP)
    =
    P\operatorname{conv}
    \left\{
        xx^\top:x\in\mathbb S^{n-1}
    \right\}.
\]
On the other hand, the spectral decomposition of positive semidefinite
matrices gives
\[
    \operatorname{conv}
    \left\{
        xx^\top:x\in\mathbb S^{n-1}
    \right\}
    =
    \left\{
        W\succeq0:\operatorname{tr}(W)=1
    \right\}
    =
    \mathcal D_n.
\]
Substitution proves~\eqref{eq:opnorm-subdiff-polar}.
\end{proof}

\begin{proof}[Proof of Theorem~\ref{thm:minimax-admissibility-trace}]
The proof has three steps: computing the subdifferential of the Minimax
curvature term at \(cP\), reducing matrix optimality to trace matching, and
recovering the complete minimizer set.

\emph{Step 1: compute the admissibility set.}
By Lemma~\ref{lem:minimax-worst-direction},
\begin{equation}
    \Phi_R^{\mathrm M}(Q)
    =
    \overline H_R(\lVert Q\rVert_{\mathrm{op}}).
    \label{eq:proof-minimax-curvature-opnorm}
\end{equation}
The function \(\overline H_R\) is finite, convex, and nondecreasing. For
\(c>0\), the convex-composition chain rule and
Lemma~\ref{lem:opnorm-subdiff-polar} give
\begin{align}
    \partial\Phi_R^{\mathrm M}(cP)
    &=
    \left\{
        \alpha Z:
        \alpha\in\partial\overline H_R(c),\
        Z\in\partial\lVert\cdot\rVert_{\mathrm{op}}(cP)
    \right\} \notag\\
    &=
    P\left\{
        \alpha W:
        \alpha\in\partial\overline H_R(c),\
        W\in\mathcal D_n
    \right\}.
    \label{eq:proof-minimax-curvature-subdiff}
\end{align}
Combining this identity with~\eqref{eq:minimax-admissibility-definition} and
\(P^\top P=I_n\) yields
\begin{equation}
    A(J_R^{\mathrm M},cP)
    =
    \left\{
        \alpha W:
        \alpha\in\partial\overline H_R(c),\
        W\in\mathcal D_n
    \right\}.
    \label{eq:proof-minimax-admissibility-density}
\end{equation}

It remains to rewrite this density-matrix representation as an intrinsic
positive-semidefinite trace slab. Since \(\overline H_R\) is nondecreasing,
\(\partial\overline H_R(c)\subseteq[0,\infty)\). If \(M=\alpha W\) belongs
to the right-hand side of
\eqref{eq:proof-minimax-admissibility-density}, then
\[
    M\succeq0,
    \qquad
    \operatorname{tr}(M)=\alpha
    \in\partial\overline H_R(c).
\]
Conversely, suppose \(M\succeq0\) and
\(\alpha:=\operatorname{tr}(M)\in\partial\overline H_R(c)\). If
\(\alpha>0\), take \(W=M/\alpha\in\mathcal D_n\). If \(\alpha=0\), positive
semidefiniteness forces \(M=0\), and any \(W\in\mathcal D_n\) still satisfies
\(M=\alpha W\). This proves
\eqref{eq:minimax-admissibility-trace-slab}.

\emph{Step 2: reduce matrix optimality to trace matching.}
Since \(J_R^{\mathrm M}\) is convex, its first-order optimality condition at
\(cP\) is
\[
    cP\in\operatorname*{arg\,min}_Q J_R^{\mathrm M}(Q;G)
    \quad\Longleftrightarrow\quad
    G\in\partial\Phi_R^{\mathrm M}(cP).
\]
Writing \(G=PS\) and using
\eqref{eq:proof-minimax-curvature-subdiff} together with
\(P^\top P=I_n\), we obtain
\[
    G\in\partial\Phi_R^{\mathrm M}(cP)
    \quad\Longleftrightarrow\quad
    S\in A(J_R^{\mathrm M},cP).
\]
The standing assumptions ensure \(S\succ0\). Therefore,
\eqref{eq:minimax-admissibility-trace-slab} gives
\[
    S\in A(J_R^{\mathrm M},cP)
    \quad\Longleftrightarrow\quad
    \operatorname{tr}(S)
    \in\partial\overline H_R(c).
\]
Since \(S=(G^\top G)^{1/2}\), we have
\(\operatorname{tr}(S)=\lVert G\rVert_*\), proving equivalence of the first
three conditions.

Now define the scalar convex function
\[
    \psi_G(t)
    :=
    \overline H_R(t)-t\lVert G\rVert_*,
    \qquad t\geq0.
\]
Because \(c>0\) is an interior point of the domain, scalar convex optimality
gives
\[
    c\in\operatorname*{arg\,min}_{t\geq0}\psi_G(t)
    \quad\Longleftrightarrow\quad
    0\in\partial\overline H_R(c)-\lVert G\rVert_*
    \quad\Longleftrightarrow\quad
    \lVert G\rVert_*\in\partial\overline H_R(c).
\]
This proves~\eqref{eq:minimax-trace-matching-equivalences}.

\emph{Step 3: recover the complete minimizer set.}
For any \(Q\), let \(t=\lVert Q\rVert_{\mathrm{op}}\).
Lemma~\ref{lem:minimax-polar-domination} and
\(\lVert tP\rVert_{\mathrm{op}}=t\) give
\begin{equation}
    J_R^{\mathrm M}(Q;G)
    \geq
    J_R^{\mathrm M}(tP;G)
    =
    \overline H_R(t)-t\lVert G\rVert_*
    =
    \psi_G(t),
    \label{eq:proof-minimax-scalar-reduction}
\end{equation}
with equality if and only if \(Q=tP\). If \(c\) minimizes \(\psi_G\), then
\eqref{eq:proof-minimax-scalar-reduction} shows that \(cP\) minimizes
\(J_R^{\mathrm M}\). Conversely, suppose \(Q^\star\) is a global minimizer
of \(J_R^{\mathrm M}\), and let
\(t^\star=\lVert Q^\star\rVert_{\mathrm{op}}\). Polar-factor domination and
global optimality force
\[
    J_R^{\mathrm M}(t^\star P;G)
    =
    J_R^{\mathrm M}(Q^\star;G).
\]
By the unique equality condition in
Lemma~\ref{lem:minimax-polar-domination}, \(Q^\star=t^\star P\). Comparing
with arbitrary \(sP\) further shows that \(t^\star\) must minimize
\(\psi_G\). Hence~\eqref{eq:minimax-complete-argmin} holds.
\end{proof}

\subsection{Proofs for Section~\ref{subsec:two-admissibility-geometries}}

\begin{proof}[Proof of Corollary~\ref{cor:two-admissibility-mechanisms}]
Equations~\eqref{eq:comparison-average-admissibility}
and~\eqref{eq:comparison-minimax-admissibility} follow directly from
Theorems~\ref{thm:main1} and~\ref{thm:minimax-admissibility-trace}. If the law
of \(R\) is atomless, the proof of
Corollary~\ref{cor:atomless-zero-capacity-rigidity} shows that \(w_c=0\).
Therefore,
\[
    A(J_R,cP)=b_cI_n+0\mathcal K_n=\{b_cI_n\}.
\]
This proves that all nonscalar freedom in the average model is carried by
\(w_c\mathcal K_n\).

Now suppose \(n\geq 2\), and choose
\(\alpha>0\) such that
\(\alpha\in\partial\overline H_R(c)\). Let
\[
W_1
:=
\frac{1}{n+1}
\operatorname{diag}(2,1,\ldots,1),
\qquad
W_2
:=
\frac{1}{n}I_n.
\]
Both \(W_1\) and \(W_2\) belong to \(\mathcal D_n\).
Therefore, by Theorem~3,
\[
M_1:=\alpha W_1,
\qquad
M_2:=\alpha W_2
\]
both belong to
\(\mathcal A(J_R^{\mathrm M},cP)\).
Moreover, \(M_1\) is positive definite and nonscalar,
whereas \(M_2\) is scalar.
This construction relies only on positive semidefiniteness
and the trace constraint; it requires neither radial atoms
nor positive probability that \(cR\) hits a kink of \(H\).
Thus, the matrix-shape freedom in the Minimax model does not
depend on atom--kink resonance.
\end{proof}

\section{Discussion and Conclusion}
\label{sec:discussion}

This paper studied the variational status of Muon's polar direction
within curvature-based one-step models. Starting from the Isotropic
Curvature Model, we asked whether its fixed-sphere explanation of
gradient orthogonalization remains valid when the input radius is
allowed to vary. The resulting analysis reveals a sharp distinction
between radial concentration at the level of function values and
stability of the first-order geometry governing exact optimality.
To make this distinction precise, we introduced the Random-Radius
Isotropic Curvature Model and the Polar-Factor Admissibility Set. The
latter converts exact scaled-polar optimality into a geometric
condition on the positive polar factor of the gradient. For an
arbitrary radial law, we obtained a complete characterization of this
set as a translated and dilated spherical zonoid. The translation
represents the isotropic baseline response, whereas the dilation
contains all available nonscalar matrix freedom.
The key quantity controlling this dilation is the atom--kink response
capacity. It receives contributions only from radial atoms that align,
after scaling, with nondifferentiable points of the curvature profile.
Thus, nonscalar admissibility in the RR-ICM is not generated by radial
concentration alone, but by an exact resonance between the radial law
and the curvature kinks. This characterization yields a quantitative
spectral-capacity bound and, in the atomless case, a sharp rigidity
result: exact scaled-polar optimality can hold only when the positive
polar factor of the gradient is scalar, or equivalently when the
gradient is already a scaled column isometry.
This rigidity highlights a singular feature of exact first-order
optimality. A thin atomless shell may be arbitrarily close to a
deterministic sphere, and the corresponding effective curvature
profiles may be close in function value, while their admissibility
geometries remain fundamentally different. The deterministic sphere
can place all directional samples at the same kink and exploit the
resulting interval of subgradients. An atomless radial perturbation
hits each prescribed kink with probability zero and can therefore
remove the entire direction-dependent slope-selection mechanism.
Radial concentration does not, by itself, preserve the variational
structure required for exact orthogonalization.
This conclusion should not be interpreted as a limitation of Muon as
an optimization method. Rather, it identifies a limitation of
directional averaging as a curvature-based explanation of Muon's polar
update. The empirical effectiveness and established spectral
interpretations of Muon suggest that its polar direction reflects a
geometry more robust than the atom--kink mechanism available in the
RR-ICM.

Motivated by this tension, we introduced the Minimax RR-ICM, which
retains the radial law, curvature profile, and directional state space
of the RR-ICM, but replaces the prescribed uniform directional average
by worst-case aggregation over probability laws on the sphere. We
proved that, whenever a nonzero global minimizer exists, every such
minimizer lies on the polar ray generated by the gradient. The radial
law and curvature profile determine whether an optimizer exists and
select its scalar magnitude, but they do not alter its normalized
matrix direction. The scale is characterized by an exact
one-dimensional convex problem involving the effective radial profile
and the nuclear norm of the gradient.
The Minimax RR-ICM therefore provides a new curvature-based minimax
variational characterization of Muon's polar direction. Importantly,
the operator-norm geometry is not imposed through an external
constraint, trust region, or norm budget. It emerges from robustifying
the same radial-curvature model over directional probability laws.
This supplies a direct variational bridge between the ICM viewpoint
and existing spectral interpretations of Muon.
The two models also admit a unified comparison through the
Polar-Factor Admissibility Set. In the RR-ICM, admissible nonscalar
spectra are confined to a translated and dilated spherical zonoid,
whose size is governed by atom--kink resonance. In the Minimax RR-ICM,
the admissibility set is instead a positive-semidefinite trace slab.
The latter constrains the total spectral mass but accommodates every
positive-semidefinite spectral shape with an admissible trace.
Accordingly, the two models generate nonscalar freedom through
different mechanisms: direction-dependent slope selection at radial
atom--kink resonances in the RR-ICM, and the set-valued
subdifferential geometry of the operator norm at a scaled polar point
in the Minimax RR-ICM. Radial atomlessness destroys the former
mechanism but leaves the latter matrix-shape freedom intact.

\paragraph{Scope and future directions.}
The present framework concerns exact optimality in a one-step
curvature surrogate. It therefore does not by itself determine whether
the radial fragility identified here leads to a substantial
optimization error, nor does it describe the full training dynamics of
practical Muon.
The first natural direction is to develop a quantitative theory of
approximate polar-factor admissibility. Although an atomless radial
law eliminates the set-valued response required for exact
scaled-polar optimality, a highly concentrated radial law may still
make the polar update nearly optimal or place the true minimizer close
to the polar ray. Relating the distance
\[
    \operatorname{dist}
    \bigl(S, A(J_R,cP)\bigr)
\]
to an objective gap, a directional error, or the distance between the
optimizer and the polar ray would determine whether radial fragility
is only an exact first-order phenomenon or persists quantitatively.
A second direction is to study directional ambiguity sets between the
two endpoints considered in this paper. The RR-ICM fixes the uniform
spherical law, whereas the Minimax RR-ICM optimizes over all
probability laws on the sphere. Restricted ambiguity classes, such as
divergence or transport neighborhoods of the uniform law, may generate
intermediate Polar-Factor Admissibility Sets and interpolate between
spherical-zonoid and positive-semidefinite trace-slab geometry.
Characterizing this interpolation could quantify how much directional
robustification is required to support a prescribed range of gradient
spectra.

More broadly, our results show that the variational geometry selected
by an isotropic curvature model depends critically on how variation
across perturbation directions is aggregated. Directional averaging
makes exact polar optimality radially fragile, whereas full minimax
aggregation makes the polar direction universal, conditional on the
existence of a nonzero optimizer. The Polar-Factor Admissibility Set
provides a common language for studying both endpoints and the
geometries that may lie between them.

\bibliographystyle{abbrv}
\bibliography{ref.bib}

\end{document}